\theoremstyle{plain}
\newtheorem{theorem}{Theorem}[section]
\newtheorem{lemma}[theorem]{Lemma}
\newtheorem{corollary}[theorem]{Corollary}
\theoremstyle{definition}
\theoremstyle{remark}
\newtheorem{remark}[theorem]{Remark}
\definecolor{myblue}{HTML}{1A4D8F}
\definecolor{mypurple}{HTML}{6A4C93}
\crefname{figure}{Fig.}{Figs.}
\crefname{table}{Tab.}{Tabs.}
\crefname{equation}{Eq.}{Eqs.}
\crefname{section}{§}{§§}
\crefname{subsection}{§}{§§}
\newcommand{\eg}{\textit{e.g.}\@\xspace}
\newcommand{\hlc}[2]{%
  \begingroup
    \setlength{\fboxsep}{0.1pt}
    \colorbox{#1}{\strut #2}%
  \endgroup
}
\definecolor{softblue}{RGB}{235, 242, 250}
\definecolor{mildblue}{RGB}{215, 230, 245}
\definecolor{calmblue}{RGB}{195, 220, 240}
\definecolor{softgreen}{RGB}{235, 245, 235} 
\definecolor{mildgreen}{RGB}{220, 238, 220}
\definecolor{calmgreen}{RGB}{200, 228, 200}
\definecolor{lightgray}{RGB}{245,245,245}
\definecolor{mediumgray}{RGB}{233,233,233}
\definecolor{darkgray}{RGB}{221,221,221}
\definecolor{InkGreen}{RGB}{0, 88, 68}
\definecolor{DeepBlue}{RGB}{0, 51, 102}
\definecolor{FreeRetGreen}{HTML}{0A8F5C}
\definecolor{FreeRetBlue}{HTML}{005EB8}
\newcommand{\FreeRetTitle}{%
  \textcolor{FreeRetGreen}{Free}\textcolor{FreeRetBlue}{Ret}%
}
\icmltitlerunning{FreeRet: MLLMs as Training-Free Retrievers}
\begin{document}

\twocolumn[
  \icmltitle{\FreeRetTitle: MLLMs as Training-Free Retrievers}




  \begin{icmlauthorlist}
    \icmlauthor{Yuhan Zhu}{nju,sail}
    \icmlauthor{Xiangyu Zeng}{nju,sail}
    \icmlauthor{Chenting Wang}{sail,sjtu}
    \icmlauthor{Xinhao Li}{nju} \\
    \icmlauthor{Chunxu Liu}{nju}
    \icmlauthor{Yicheng Xu}{sail,tokyo}
    \icmlauthor{Ziang Yan}{sail,zju}
    \icmlauthor{Yi Wang}{sail}
    \icmlauthor{Limin Wang}{nju,sail}
  \end{icmlauthorlist}
  \vskip 0.12in
  \begin{center}
    \small\href{https://github.com/MCG-NJU/FreeRet}{\textcolor{myblue}{\faGithub\ \texttt{https://github.com/MCG-NJU/FreeRet}}}%
  \end{center}

  \icmlaffiliation{nju}{Nanjing University}
  \icmlaffiliation{sail}{Shanghai AI Lab}
  \icmlaffiliation{sjtu}{Shanghai Jiaotong University}
  \icmlaffiliation{tokyo}{Institute of Science Tokyo}
  \icmlaffiliation{zju}{Zhejiang University}

  \icmlcorrespondingauthor{Limin Wang}{lmwang@nju.edu.cn}

  \icmlkeywords{Machine Learning, ICML}

  \vskip 0.3in
]



\printAffiliationsAndNotice{}  

\begin{abstract}
Multimodal large language models (MLLMs) are emerging as versatile foundations for mixed-modality retrieval.
Yet, they often require heavy post-hoc training to convert them into contrastive encoders for retrieval.
This work asks: \textit{Can off-the-shelf MLLMs serve as powerful retrievers without additional training?}
We present \textbf{FreeRet}, a plug‑and‑play framework that turns any MLLM into a two‑stage retriever.
FreeRet first derives semantically grounded embeddings directly from the model for fast candidate search, and then exploits its reasoning ability for precise reranking.
The framework contributes three advances: bypassing lexical alignment layers to obtain semantically faithful embeddings, conditioning representation generation with explicit priors, and mitigating framing effect in reranking via neutral choice framing.
On the MMEB and MMEB-V2, FreeRet substantially outperforms models trained on millions of pairs.
Beyond benchmarks, FreeRet is model-agnostic and scales seamlessly across MLLM families and sizes, preserves their generative abilities, supports arbitrary modality combinations, and unifies retrieval, reranking, and generation into end-to-end RAG within a single model.
Our findings demonstrate that pretrained MLLMs, when carefully harnessed, can serve as strong retrieval engines without training, closing a critical gap in their role as generalists.
\end{abstract}
\section{Introduction}
Mixed-modality retrieval, where both queries and targets may consist of arbitrary modalities or their combinations—such as text, images, videos, and interleaved multimodal content—underlies applications ranging from web search~\citep{mitra2017learning} and retrieval-augmented generation (RAG)~\citep{lewis2020retrieval,gao2023retrieval}, to embodied agents~\citep{singh2025agentic,li2026egocentric} and personalized recommendation~\citep{rajput2023recommender}. Conventional solutions rely on two stages: embedding-based candidate search followed by reranking for accuracy. CLIP-style dual encoders~\citep{CLIP,BLIP} have long been the workhorse of this paradigm, but they exhibit fundamental limitations: they struggle with long queries, compositional semantics, and interleaved multimodal inputs. These shortcomings highlight the need for a more generalizable foundation.

\begin{figure*}[t]
\centering
\includegraphics[width=0.95\textwidth]{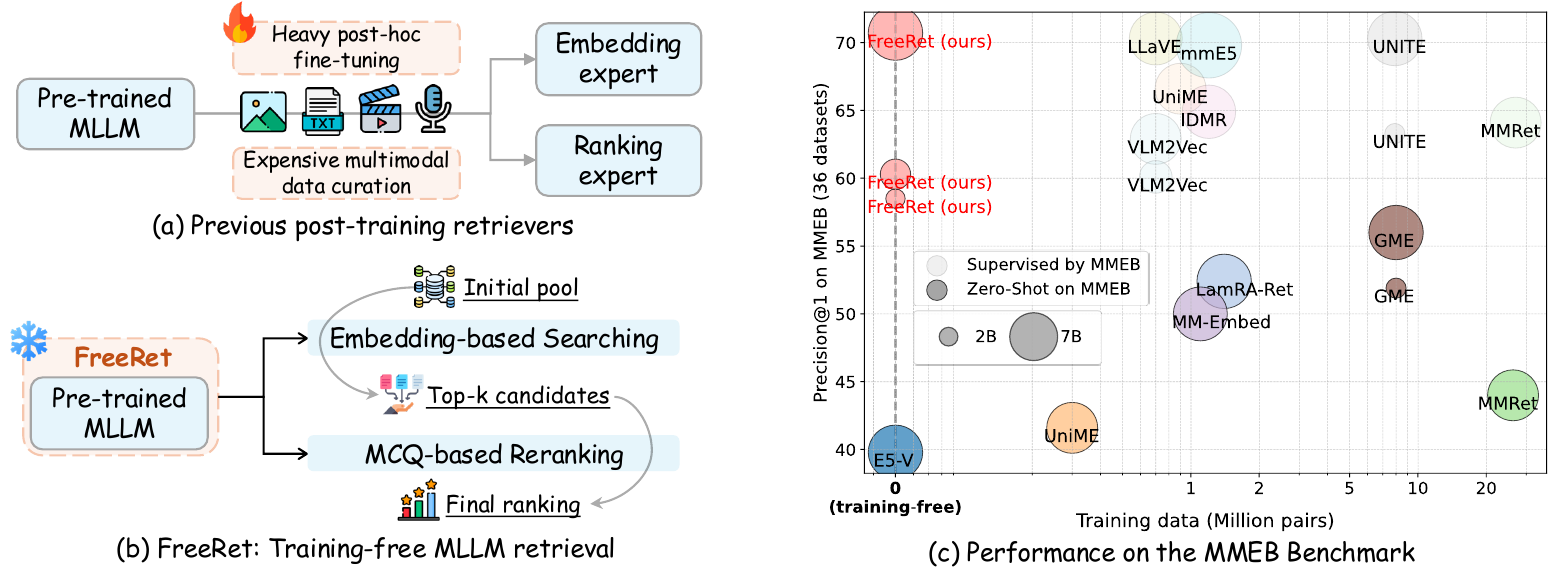}
\caption{\textbf{Comparison between prior post-training retrievers and our FreeRet.} (a) Existing methods rely on extensive data curation and costly fine-tuning to construct \emph{separate} embedding and reranking modules. (b) FreeRet directly employs MLLMs as \emph{unified} embedders and rerankers without any extra training. (c) On the MMEB benchmark covering 36 datasets, FreeRet outperforms models trained on millions of pairs and matches the best methods supervised directly on MMEB.}
\label{fig:teaser}
\end{figure*}

Multimodal large language models (MLLMs) offer such a foundation. With powerful reasoning and flexible input handling, they promise to unify understanding across modalities. Yet most recent efforts adapt MLLMs to retrieval through heavy post-hoc fine-tuning (\cref{fig:teaser}(a))~\citep{MM-Embed,GME,LamRA}. This paradigm, however, encounters two persistent obstacles. First, it demands massive paired data and expensive fine-tuning for every new backbone or modality configuration, hampering scalability. Second, its generalization remains fragile: without in-domain supervision, even large, carefully curated models often perform poorly on standard benchmarks.

\textit{Can we instead harness MLLMs for retrieval \textbf{without} training, letting them act simultaneously as embedders and rerankers?}
Early training-free attempts embed generated token states~\citep{E5V,PromptEOL}, but these representations are coarse and generally omit reranking, a critical stage for robust performance. Other approaches integrate learned rerankers, but they sacrifice efficiency and modularity by requiring additional supervision and model components.

To address this gap, we propose \textbf{FreeRet}, a plug-and-play framework that transforms any off-the-shelf MLLM into a competitive two-stage retriever (\cref{fig:teaser}(b)). FreeRet first extracts embeddings for efficient candidate search, then prompts the same model to conduct fine-grained reranking. Crucially, it requires no parameter updates, auxiliary models, or external data. By fully exploiting both the representational and reasoning capacities of MLLMs, FreeRet demonstrates that \emph{training-free retrieval is not only feasible but also state-of-the-art competitive}.

Our framework rests on three key contributions:
(1) We refine embedding quality by bypassing the final MLP before the LM head, which enforces surface-level lexical alignment at the expense of semantic structure. Removing it yields embeddings that better capture underlying meaning.  
(2) We stabilize the embedding space via controlled summarization prompts that inject semantic, denoising, and contextual priors. This improves semantic focus and task relevance.  
(3) We uncover an LLM \emph{framing effect} in reranking: logically equivalent label formats (\eg Yes/No vs.\ True/False) yield divergent accuracies due to pretraining biases~\citep{zhao2021calibrate}. We mitigate this by casting reranking as multiple-choice questions (MCQ), which elicit more neutral and reasoned judgments.

Evaluated on MMEB~\citep{VLM2Vec}, a comprehensive suite of 36 datasets across four meta-tasks, FreeRet consistently delivers strong gains (\cref{fig:teaser}(c)). Notably, FreeRet-2B outperforms GME-7B~\citep{GME}, trained on 8M multimodal pairs, and dramatically surpasses MMRet-7B~\citep{MMRet}, trained on 26.2M pairs. Remarkably, FreeRet-7B competes head-to-head with methods explicitly supervised on MMEB, while on the video subset of MMEB-V2~\citep{VLM2Vec-V2}, it exceeds even post-trained approaches by large margins. Together, these results highlight the strength of a purely training-free paradigm.

Beyond empirical gains, FreeRet delivers broader benefits.
It eliminates the costly adaptation barrier, allowing immediate MLLM deployment across scales and architectures (\cref{tab:model_family_and_scale}). It naturally supports arbitrary modality combinations when applied to omni-modal models (\cref{fig:omni-modality-retrieval}). By avoiding fine-tuning, FreeRet fully retains the instruction-following, conversational, and reasoning capacity of pretrained MLLMs. Moreover, it streamlines the RAG framework, unifying retrieval, reranking, and generation within a single model, and enhances long video understanding by retrieving relevant clips before deeper reasoning (\cref{tab:long_video_exp}), complementing interleaved long-video reasoning~\citep{zeng2026video-o3}. Additionally, FreeRet serves as a diagnostic lens, extending the evaluation of MLLMs beyond conventional QA settings to include retrieval-oriented tasks.

In summary, FreeRet establishes pretrained MLLMs as competitive, versatile, and training-free retrievers. It challenges the necessity of large-scale supervised adaptation and points toward a future where a single generalist model unifies retrieval with broader multimodal understanding.
\section{Related Work}
\paragraph{Multimodal Large Language Models.}
The evolution of MLLMs reflects a steady progression from alignment to generation. CLIP~\citep{CLIP} demonstrated the power of contrastive pretraining on web-scale image–text pairs, yielding highly transferable embeddings. BLIP~\citep{BLIP} advanced this framework by unifying contrastive and generative objectives. Flamingo~\citep{Flamingo} introduced gated cross-attention for few-shot multimodal learning, while LLaVA~\citep{llava,llava-ov} pushed instruction tuning into visual dialogue to enhance interactive reasoning. More recent efforts, including GPT-4V~\citep{Gpt-4}, Qwen-VL~\citep{Qwen2-vl,Qwen2.5-VL}, and InternVL~\citep{InternVL2.5,Internvl3}, scale these ideas toward general-purpose multimodality, addressing tasks from grounded reasoning to document-level understanding. Beyond dialogue-centric settings, vision models have also been tailored to domain-specific perception, such as salient object detection in optical remote sensing~\citep{zeng2023adaptive,hu2025gait}. Collectively, these advances trace a trajectory from multimodal alignment to complex reasoning, establishing the backbones for multimodal retrieval.

\begin{figure*}[t]
\centering
\begin{subfigure}[t]{0.49\textwidth}
    \centering
    \includegraphics[width=0.9\textwidth]{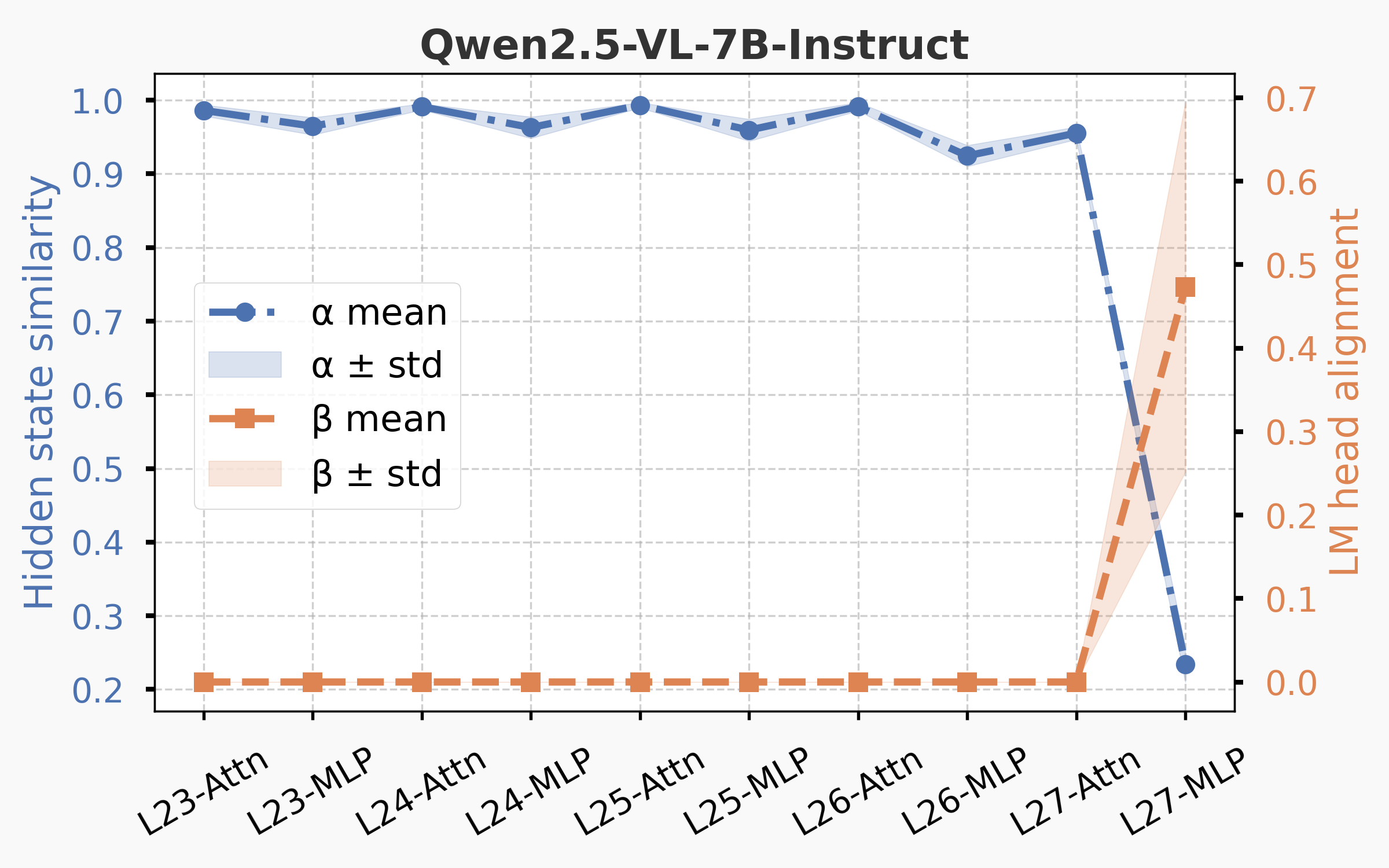}
    \vspace{0mm}
    \caption{Cosine similarity between adjacent layer hidden states and their alignment with LM head.}
    \label{fig:lexical-1}
\end{subfigure}
\hfill
\begin{subfigure}[t]{0.49\textwidth}
    \centering
    \includegraphics[width=0.9\textwidth]{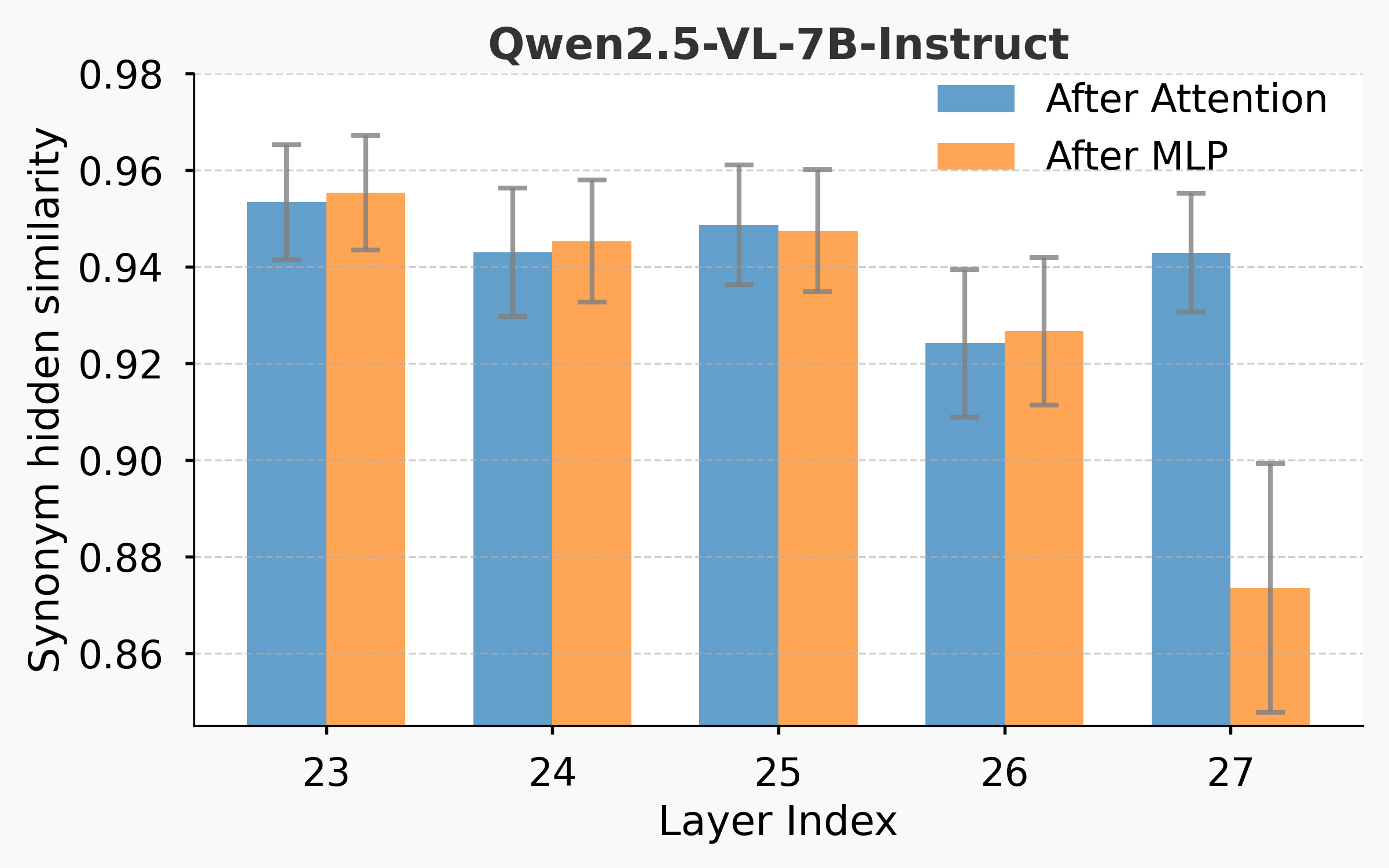}
    \vspace{0mm}
    \caption{Layer-wise hidden state similarity for 250 synonym pairs (mean ± standard deviation).}
    \label{fig:lexical-2}
\end{subfigure}
\caption{\textbf{Probing experiments on lexicalization pressure}. Results for 3B and 32B variants are provided in~\cref{sup_fig:lexical}.}
\label{fig:lexical}
\end{figure*}

\paragraph{Training-based Retrieval with MLLMs.}
Early multimodal retrieval has largely relied on CLIP-style dual encoders~\citep{CLIP,SigLIP,Eva-clip}, yet this paradigm faces two core challenges. First, text encoders struggle with long or compositional inputs that demand fine-grained reasoning. Second, the unimodal design prevents robust handling of interleaved content, limiting applications such as composed image retrieval~\citep{vo2019composing}.
To address these shortcomings, recent work repurposes MLLMs as retrieval backbones, treating MLLMs as universal encoders, and fine-tunes them with contrastive learning. Within this framework, researchers have explored a spectrum of techniques, ranging from hard negative mining~\citep{UniME,ABC,Llave,QQMM-Embed} and modality-aware sampling~\citep{Puma,UNITE} to reinforcement learning~\citep{MAPLE} and joint optimization with generative objectives~\citep{VladVA,CAFe}. Parallel lines of work focus on curating large-scale mixed-modality corpora~\citep{GME,MMRet,mmE5,IDMR}, designing broad evaluation suites~\citep{MM-Embed,VLM2Vec,MIEB}, training rerankers~\citep{MM-R5}, and building end-to-end pipelines that combine embedding with reranking~\citep{MM-Embed,LamRA,DeKR}.
Despite these advances, training-based approaches retain fundamental bottlenecks. They demand massive multimodal datasets and expensive re-training whenever a new backbone or modality configuration is introduced.
More importantly, models optimized for one benchmark~(\eg MMEB) often transfer poorly to others~(\eg MIEB), exposing weak generalization without benchmark-specific supervision.
For video-centric understanding, prior work has also pursued efficient online video understanding with persistent event memory~\citep{zeng2026streamforest}.

\paragraph{Training-free Retrieval with Auto-Regressive Models.}
Most attempts at training-free retrieval with language models have mostly focused on the text-only setting. They extract embeddings from internal hidden states without further optimization. Early approaches such as PromptEOL~\citep{PromptEOL} design handcrafted prompts (\eg, ``The sentence means in one word:'') so that the hidden state of the next token approximates a sentence-level representation. MetaEOL~\citep{MetaEOL} and GenEOL~\citep{Geneol} improve robustness by combining multiple embeddings, while~\citet{PretCoTandKE} enhances expressivity using CoT prompting or external knowledge. Token-Prepend~\citep{Token_prepend} and Echo-Embedding~\citep{Echo-embedding} mitigate the causal attention bias that suppresses early tokens, while MoEE~\citep{MoEE} fuses routing weights with hidden states in MoE architectures to yield more expressive embeddings.
Complementary to pure embedding extraction, recent studies adapt vision-language models without full fine-tuning via augmentation, weighting, and transportation~\citep{zhu2024awt}.
In contrast, the multimodal setting remains underexplored~\citep{E5V, ju2025generator}. They often produce coarse representations that capture only partial cross-modal alignment, and they lack integration with reranking mechanisms which is crucial to accurate retrieval. As a result, their performance lags far behind approaches that permit task-specific training.

\section{\FreeRetTitle}

\begin{figure*}[t]
    \centering
    \includegraphics[width=0.98\textwidth]{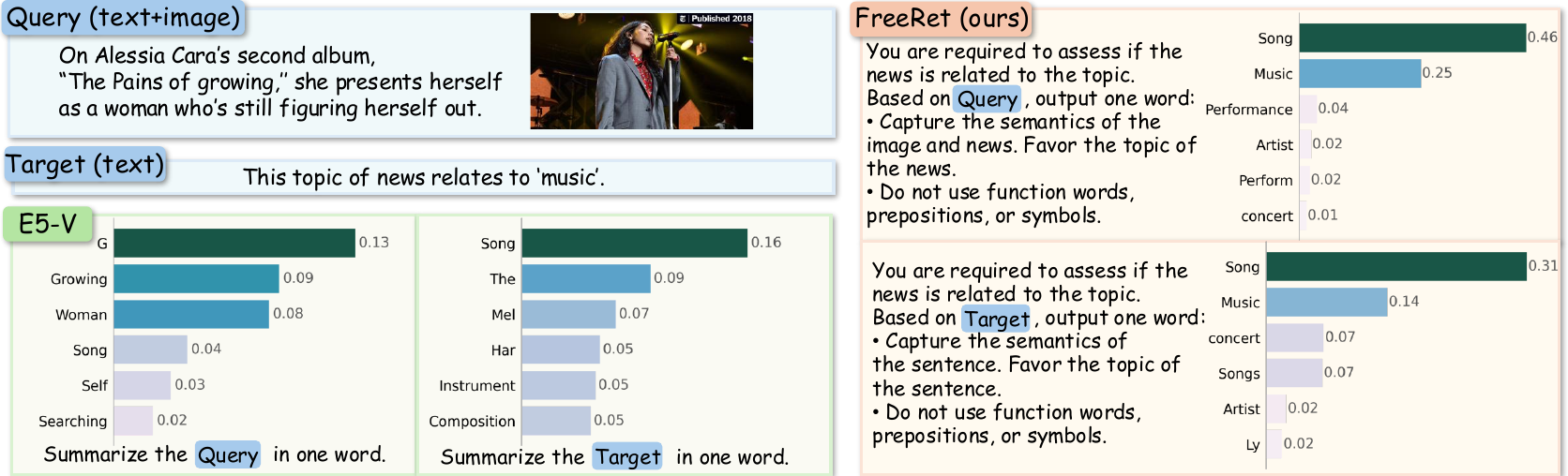}
    \caption{\textbf{Word‑level probability visualization} for the output “One Word” of different methods. The top‑left panel shows the input example (from N24News~\citep{N24news}).}
    \label{fig:word_visualize}
\end{figure*}

\subsection{Preliminary: Training-free Embedding Extraction}
\label{preliminary}
A representative attempt of MLLMs training-free embedding is E5-V~\citep{E5V}. Given an input $x$ composed of arbitrary modality combinations, E5-V applies a fixed prompting template:
\begin{equation}
\label{eq:simple_one_word}
{\small`` \texttt{[}x\texttt{]Summary above content in one word}"},
\end{equation}
and queries the MLLM to generate a single token $y$. Let $h_{L}(y)$ denote the hidden state of $y$ at the final transformer layer index $L$. The embedding of the input $x$ is then defined as
$e(x) = h_{L}(y)$.
This strategy preserves all parameters and thereby inherits the reasoning and generalization abilities from training, consistent with efficient test-time prompt tuning of vision-language models~\citep{zhu2024efficient}. Yet its effectiveness as a generic embedder is constrained. The hidden state $h_{L}(y)$ is optimized for predicting the next token rather than for encoding input semantics, which biases it toward surface-level lexical statistics~\citep{MoEE}. Furthermore, current training-free designs often ignore the reranking stage, which is crucial for accurate retrieval.

To overcome these issues. In \cref{lexicalization_pressure}, we examine how the lexicalization pressure induced by the final MLP layer limits the semantic capacity of hidden states. In \cref{controlled_generation}, we move beyond one-token summarization and design a controlled generation objective. Finally, in \cref{llm_framing_rffect}, we investigate the role of MLLMs as rerankers and identify a framing-effect phenomenon that undermines robustness.

\begin{figure*}[t]
    \centering
    \includegraphics[width=0.99\textwidth]{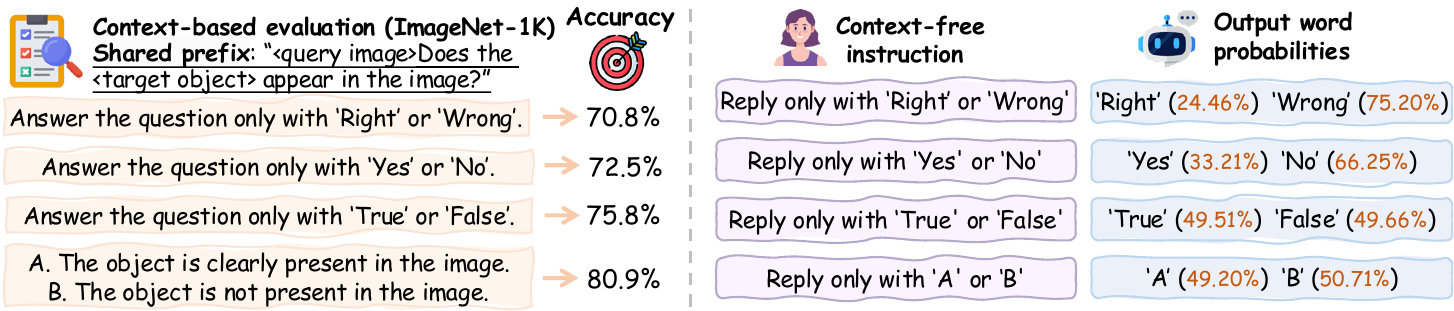}
    \caption{\textbf{LLM framing effect } on benchmark accuracy (left) and inherent lexical biases in context-free response modes (right).}
    \label{fig:frame_effect}
    \end{figure*}

\subsection{Lexicalization Pressure in MLLM Representations}
\label{lexicalization_pressure}

As discussed in~\cref{preliminary}, the final hidden $h_{L}(y)$ is optimized for generating vocabulary logits rather than preserving semantic structure, leading to suboptimal embedding quality.
Prior studies indicate that internal representations evolve across depth: early and intermediate layers capture semantic abstractions, while later layers reshape these abstractions toward task-specific objectives \citep{zeiler2014visualizing}. In MLLMs, we refer to this final transformation as \emph{lexicalization pressure}: the process by which semantic features are projected into a space for discrete lexical prediction.

\paragraph{Probing Experiment.}
We begin by examining where lexicalization pressure arises within the model. Using Qwen2.5-VL at three scales (3B, 7B, and 32B), we probe the last five transformer layers. Each layer consists of an attention sub-layer and an MLP sub-layer, and we denote their outputs as $h^{\text{Attn}}_{\ell}$ and $h^{\text{MLP}}_{\ell}$ for layer index $\ell$. 
We assess representational shifts by the cosine similarity between consecutive sub-layer outputs: $\alpha^{\text{Attn}}_{\ell}=\cos\left(h^{\text{MLP}}_{\ell-1}, h^{\text{Attn}}_{\ell}\right)$, $\alpha^{\text{MLP}}_{\ell}=\cos\left(h^{\text{Attn}}_{\ell}, h^{\text{MLP}}_{\ell}\right)$. A lower $\alpha$ indicates a stronger distortion of representations. 
Next, we measure how strongly each hidden state is pulled into the lexical prediction space. Let $\mathbf{W}\in\mathbb{R}^{d\times |V|}$ be the LM head and let $\mathbf{w}_{y^*}$ be the column for predicted token $y^*$, we define $\beta^{\text{Attn}}_{\ell} = \cos\left( h^{\text{Attn}}_{\ell}, \mathbf{w}_{y^*}\right)$, $
\beta^{\text{MLP}}_{\ell} = \cos\left( h^{\text{MLP}}_{\ell}, \mathbf{w}_{y^*}\right)$. Here, a higher $\beta$ corresponds to stronger alignment with the lexical head.
The results in~\cref{fig:lexical-1} reveal consistent trends:
1) $\alpha$ remains very high (over 0.9) across most layers but drops sharply after the final MLP (below 0.3); 
2) $\beta$ stays low (around 0) across earlier layers but rises abruptly right after the last MLP (up to 0.5). 
These together point to \textit{the final MLP as the focal point of lexicalization}, transforming semantically rich intermediate states into vectors aligned with token prediction.

\paragraph{Effect on Semantic Retention.}
As shown in~\cref{fig:lexical-2}, cosine similarity remains around 94\% across most layers, but declines to 87\% after the final MLP. This suggests that lexicalization pressure compels hidden states to converge on coordinates tied to individual lexical items, erasing part of their semantic continuity. Such embeddings are therefore less suitable for retrieval tasks that require fine-grained semantic discrimination.

\paragraph{Remedy.}
Building on these findings, we propose a simple yet effective fix: discard the final MLP layer when producing embeddings. This choice retains the high-level abstractions encoded in deeper layers while avoiding the distortion caused by lexicalization pressure. As a result, the final representations capture semantic content more faithfully and exhibit improved robustness in retrieval tasks.

\subsection{Controlled Embedding Generation}
\label{controlled_generation}

Early multimodal embedders often relied on simple prompting strategies, \eg ``\texttt{Summarize the input in one word}''. Although superficially elegant, this approach leaves the generation process largely unconstrained and introduces several issues. First, \emph{semantic loss}: compressing complex multimodal signals into a single token frequently leads to overly abstractive concepts. Second, \emph{vocabulary noise}: high-frequency but uninformative words, including articles and prepositions, pollute the embedding space. Third, \emph{weak task relevance}: task-agnostic prompts yield representations poorly aligned with specific retrieval needs.

\cref{fig:word_visualize} (left) illustrates these effects. E5-V often predicts vague words such as ``Self'' or ``Searching'', or produces spurious function words, or drifts toward semantic-related but task-irrelevant concepts like ``Growing''. Such outputs dilute semantic precision and lead to degraded retrieval performance.

We address these limitations by reframing free-form one-word summarization as a \emph{controlled generation} problem. Crucially, our method requires neither architectural changes nor extra training; the improvements stem purely from prompt design. We introduce three lightweight constraints:\\
\hspace*{1.5em}1)~\textit{Task alignment:} steer the generation process toward specific task priors (``\texttt{You are reqired to assess if <placeholder> is related to <placeholder>.}'').\\
\hspace*{1.5em}2)~\textit{Semantic grounding:} anchor the summary to the input concent (``\texttt{Capture the semantics of <placeholder>}'').\\
\hspace*{1.5em}3)~\textit{Noise suppression:} eliminate trivial tokens (``\texttt{Do not use function words, prepositions, or symbols}'').\\
This strategy preserves the simplicity of single-word outputs yet enforces structural discipline. As shown in~\cref{fig:word_visualize} (right), the resulting vocabularies of \emph{Query} and \emph{Target} are more semantically aligned with each other. Consequently, the embeddings can converge more reliably, yielding representations that remain faithful to the original content while being better tailored to the user’s specific intent.

\subsection{Reranking with MLLMs: The Framing Effect}
\label{llm_framing_rffect}

To refine retrieval quality, we repurpose MLLMs as point-wise rerankers~\citep{burges2005learning}. The standard approach is straightforward: given a query-candidate pair, the model is asked to judge whether they are relevant. This forms reranking as a binary classification problem, a design choice that has become widely adopted~\citep{qwen3_embedding,MM-Embed,LamRA}.

Yet our study reveals that such paradigm hides surprising brittleness. The very act of framing the binary decision, even when the logical meaning remains identical, leads to strikingly different accuracies. For instance, in \cref{fig:frame_effect}, when prompting the model with \emph{Yes/No}, \emph{True/False}, or \emph{Right/Wrong}. One would expect these to be interchangeable, since each simply encodes a positive/negative decision. However, the model achieved 5.0\% lower accuracy with \emph{Right/Wrong} than with \emph{True/False}.

What drives this sensitivity? We posit it stems from imbalances inherited from pretraining corpora. Words differ not only in logical role but also in social and pragmatic connotations: \eg \emph{Yes} often signals politeness, \emph{No} conveys refusal, and \emph{Right/Wrong} carries a moral or judgmental tone. Such contexts may bring unintended biases to their logical use. To probe this, following ``context-free instruction'' setup in \citet{zhao2021calibrate}, the model is prompted to choose between label pairs without any context. Ideally output logits should be uniformly distributed, but we find clear asymmetries: pairs like \emph{Right/Wrong} or \emph{Yes/No} show obvious skew, while \emph{True/False} remains closer to balance. Intriguingly, greater bias correlates with lower downstream accuracy. This mirrors the classic \emph{framing effect} in cognitive science, where equivalent choices elicit different judgments depending on presentation. We term the analogous phenomenon here the \emph{LLM framing effect}.

To mitigate this effect, we frame the ranking problem as a multiple-choice question (MCQ).
Given a query-candidate pair, the model is asked:
\vspace{-2.5mm}
{\small
\begin{verbatim}
Task: Determine whether the candidate 
matches the query.
Query: {Query}
Candidate: {Candidate}
A. Yes, the candidate fully matches the query
B. No, the candidate does not match
\end{verbatim}
}
The relevance score is then computed as $\texttt{SoftMax}\left(p\left(\text{‘\texttt{A}’}\right)\right)$ from the LM head. MCQ content may be adjusted to suit the needs of each task. This design offers two benefits. First, it neutralizes semantic and affective biases from lexical choices. Second, it mirrors multiple-choice formats prevalent in pretraining data, enabling more stable predictions. Despite being semantically equivalent, this simple reframing is highly effective: as shown in \cref{fig:frame_effect}, it outperforms the commonly used \emph{Yes/No} setup by 8.4\%, underscoring the effectiveness of our design.
\begin{table*}[t]
\centering
\small
\setlength{\tabcolsep}{5pt}
\renewcommand{\arraystretch}{0.98}
\caption{\textbf{Comparison on the MMEB benchmark}~\citep{VLM2Vec}. We report Precision@1 for all models. \dag: we reproduce the training-free version of E5-V. \textcolor{gray!70}{Gray text} denotes methods trained directly on MMEB. \ddag: the amount of data used separately for the embedding and reranking models. $r_n$ denotes the reranking of the top‑$n$ candidates produced by the embedding stage.}
\vspace{-1mm}
\resizebox{0.95\textwidth}{!}{
\begin{tabular}{llc|cccc|c}
\Xhline{1.0pt}
\textbf{Model} & \textbf{MLLM} & \textbf{Train Data (M)} & \textbf{Classification} & \textbf{VQA} & \textbf{Retrieval} & \textbf{Grounding} & \textbf{Average} \\
\Xhline{0.7pt}
\multicolumn{8}{l}{\textit{Embedding-only Methods}} \\
\textcolor{gray!70}{VLM2Vec}~\citep{VLM2Vec} & \textcolor{gray!70}{LLaVA-1.6-7B} & \textcolor{gray!70}{0.7} & \textcolor{gray!70}{61.2} & \textcolor{gray!70}{49.9} & \textcolor{gray!70}{67.4} & \textcolor{gray!70}{86.1} & \textcolor{gray!70}{62.9}\\
\textcolor{gray!70}{MMRet}~\citep{MMRet} & \textcolor{gray!70}{LLaVA-1.6-7B} & \textcolor{gray!70}{26.9} & \textcolor{gray!70}{56.0} & \textcolor{gray!70}{57.4} & \textcolor{gray!70}{69.9} & \textcolor{gray!70}{83.6} & \textcolor{gray!70}{64.1} \\
\textcolor{gray!70}{IDMR}~\citep{IDMR} & \textcolor{gray!70}{InternVL2.5-26B} & \textcolor{gray!70}{1.2} & \textcolor{gray!70}{66.3} & \textcolor{gray!70}{61.9} & \textcolor{gray!70}{71.1} & \textcolor{gray!70}{88.6} & \textcolor{gray!70}{69.2} \\
\textcolor{gray!70}{CAFe}~\citep{CAFe} & \textcolor{gray!70}{LLaVA-OV-7B} & \textcolor{gray!70}{0.9} & \textcolor{gray!70}{65.2} & \textcolor{gray!70}{65.6} & \textcolor{gray!70}{70.0} & \textcolor{gray!70}{91.2} & \textcolor{gray!70}{69.8} \\
\textcolor{gray!70}{mmE5}~\citep{mmE5} & \textcolor{gray!70}{Llama-3.2-11B-Vision} & \textcolor{gray!70}{1.2} & \textcolor{gray!70}{67.6} & \textcolor{gray!70}{62.6} & \textcolor{gray!70}{71.0} & \textcolor{gray!70}{89.6} & \textcolor{gray!70}{69.8} \\
\textcolor{gray!70}{LLaVE}~\citep{Llave} & \textcolor{gray!70}{LLaVA-OV-7B} & \textcolor{gray!70}{0.7} & \textcolor{gray!70}{65.7} & \textcolor{gray!70}{65.4} & \textcolor{gray!70}{70.9} & \textcolor{gray!70}{91.9} & \textcolor{gray!70}{70.3} \\
\textcolor{gray!70}{UNITE}~\citep{UNITE} & \textcolor{gray!70}{Qwen2-VL-7B} & \textcolor{gray!70}{7.9} & \textcolor{gray!70}{68.3} & \textcolor{gray!70}{65.1} & \textcolor{gray!70}{71.6} & \textcolor{gray!70}{84.8} & \textcolor{gray!70}{70.3} \\
\textcolor{gray!70}{UniME}~\citep{UniME} & \textcolor{gray!70}{LLaVA-OV-7B} & \textcolor{gray!70}{0.9} & \textcolor{gray!70}{66.8} & \textcolor{gray!70}{66.6} & \textcolor{gray!70}{70.6} & \textcolor{gray!70}{90.9} & \textcolor{gray!70}{70.7} \\
UniME~\citep{UniME} & LLaVA-1.6-7B  & 0.3 & 43.0 & 17.7 & 42.5 & \textbf{63.2} & 41.6 \\
MMRet~\citep{MMRet} & LLaVA-1.6-7B  & 26.2 & 47.2 & 18.4 & 56.5 & \underline{62.2} & 44.0 \\
MM-Embed~\citep{MM-Embed} & LLaVA-Next-7B  & 1.1 & 48.1 & 32.3 & 63.8 & 57.8 & 50.0 \\
LamRA-Ret~\citep{LamRA} & Qwen2.5-VL-7B  & 1.4 & 51.7 & 34.1 & \underline{66.9} & 56.7 & 52.4 \\
GME~\citep{GME} & Qwen2-VL-7B  & 8.0   & 57.7 & 34.7 & \textbf{71.2} & 59.3 & \textbf{56.0} \\
E5-V\dag~\citep{E5V} & Qwen2.5-VL-7B  & --   & 41.2 & 37.2 & 37.9 & 48.4 & 39.8 \\
\rowcolor{softblue} FreeRet-embed & LLaVA-OV-7B & -- & 53.0 & 47.4 & 45.7 & 53.6 & 49.1 \\
\rowcolor{softblue} FreeRet-embed & Qwen2-VL-7B & -- & \underline{59.0} & \underline{50.2} & 52.3 & 60.1 & \underline{54.5} \\
\rowcolor{softblue} FreeRet-embed & Qwen2.5-VL-7B & -- & \textbf{59.7} & \textbf{52.8} & 49.2 & 54.7 & 53.7 \\
\Xhline{0.7pt}
\multicolumn{8}{l}{\textit{Embedding then Reranking}} \\
\textcolor{gray!70}{UniME-V2}$_{r5}$~\citep{UniME-v2} & \textcolor{gray!70}{Qwen2.5-VL-7B} & \textcolor{gray!70}{0.6 + 0.6$^{\ddag}$} & \textcolor{gray!70}{--} & \textcolor{gray!70}{--} & \textcolor{gray!70}{--} & \textcolor{gray!70}{--} & \textcolor{gray!70}{69.6} \\
MM-Embed$_{r10}$~\citep{MM-Embed} & LLaVA-Next-7B & 1.1 + 0$^{\ddag}$ & 46.5 & 60.8 & 57.8 & 52.3 & 54.9 \\
LamRA$_{r10}$~\citep{MM-Embed} & Qwen2.5-VL-7B & 1.4 + 1.1$^{\ddag}$ & 55.9 & 40.4 & 66.1 & 55.6 & 55.0 \\
\rowcolor{mildblue} FreeRet$_{r5}$ & Qwen2.5-VL-7B & -- & \underline{68.3} & 64.6 & 63.1 & 69.9 & 65.7 \\
\rowcolor{mildblue} FreeRet$_{r10}$ & Qwen2.5-VL-7B & -- & 67.2 & \underline{67.6} & \underline{66.3} & \underline{74.3} & \underline{67.8} \\
\rowcolor{mildblue} FreeRet$_{r50}$ & Qwen2.5-VL-7B & -- & \textbf{69.4} & \textbf{70.0} & \textbf{69.9} & \textbf{78.2} & \textbf{70.7} \\
\Xhline{1.0pt}
\end{tabular}
}
\label{tab:mmeb_compare}
\end{table*}
\section{Experiments}

\subsection{Evaluation Setup}
We evaluate FreeRet on MMEB~\citep{VLM2Vec} and the video subset of MMEB-V2~\citep{VLM2Vec-V2}, spanning diverse multimodal tasks: image classification (10 datasets), visual question answering (10), information retrieval (12), visual grounding (4), video classification (5), and video retrieval (5). Following prior work, we report Precision@1 for all image and video tasks.  

To explore generality across model families and scales, we deploy FreeRet with the Qwen series (Qwen2-VL~\citep{Qwen2-vl}, 2B/7B, Qwen2.5-VL~\citep{Qwen2.5-VL}, 3B/7B/32B, and Qwen2.5-Omni~\citep{Qwen2.5-Omni}), InternVL (InternVL3~\citep{Internvl3}, 2B/8B/14B), and LLaVA (LLaVA-OV-7B~\citep{llava-ov}, LLaVA-OV-1.5-8B~\citep{LLaVA-OV-1.5}).

\paragraph{Baselines.} On MMEB, we compare against three groups: (1) models trained on data including MMEB-train split; (2) models trained on data without MMEB-train; and (3) training-free methods. For the latter, we reproduce a training-free version of E5-V with Qwen2.5-VL for fair comparison. On MMEB-V2, none of the baselines are directly supervised on the benchmark.

\begin{table*}[t]
\centering
\renewcommand{\arraystretch}{0.98}
\caption{\textbf{Comparison on two video-centric tasks} from the MMEB-V2 benchmark~\citep{VLM2Vec-V2}: video classification and video retrieval, each comprising five datasets. \textit{FreeRet‑embed} denotes the embedding component of FreeRet.}
\vspace{-1mm}
\resizebox{0.8\textwidth}{!}{
\begin{tabular}{llcccc}
\Xhline{1pt}
\textbf{Model} & 
\textbf{MLLM} &
\textbf{Train data (M)} & 
\textbf{Video data} &
\textbf{Video Classification} & 
\textbf{Video Retrieval} \\
\Xhline{0.3pt}
GME~\citep{GME} & Qwen2-VL-2B   & 8.0 & \ding{55}  & 34.9 & 25.6 \\
GME~\citep{GME} & Qwen2-VL-7B   & 8.0 & \ding{55}  & 37.4 & 28.4 \\
LamRA~\citep{LamRA} & Qwen2-VL-7B & 1.4 & \ding{55} & 39.3 & 24.3 \\
VLM2Vec~\citep{VLM2Vec} & Qwen2-VL-2B & 1.0 & \ding{55} & 33.4 & 20.6 \\
VLM2Vec~\citep{VLM2Vec} & Qwen2-VL-7B & 1.0 & \ding{55} & 39.1 & 29.0 \\
VLM2Vec-V2~\citep{VLM2Vec-V2} & Qwen2-VL-2B & 1.7 & \ding{51} & 39.3 & 28.8 \\
\Xhline{0.3pt}
\rowcolor{softblue} FreeRet-embed & Qwen2-VL-2B   & -- &  -- & 47.7 & 31.7 \\
\rowcolor{mildblue} FreeRet & Qwen2-VL-2B   & -- &  -- &   \underline{58.3}&   33.6   \\
\rowcolor{softblue} FreeRet-embed & Qwen2-VL-7B   & -- &  -- & 54.3 & \underline{36.5} \\
\rowcolor{mildblue} FreeRet & Qwen2-VL-7B   & -- &  -- &   \textbf{63.2}  &   \textbf{39.3}   \\
\Xhline{1pt}
\end{tabular}}
\vspace{-2mm}
\label{tab:mmeb_video_compare}
\end{table*}

\subsection{Main Results}

Tab.~\ref{tab:mmeb_compare} and Tab.~\ref{tab:mmeb_video_compare} detail FreeRet's performance on MMEB and MMEB-V2. We discuss the performance of the embedding model (FreeRet-embed) and the full retrieval pipeline (FreeRet) separately.

\textbf{Embedding Performance.} On MMEB, FreeRet-embed (w/ Qwen2.5-VL-7B) achieves an average accuracy of 53.7\%, surpassing the training-free baseline E5-V by a significant margin of 13.9\%. Notably, despite utilizing zero training data, FreeRet-embed proves competitive with top supervised embedders such as GME and MM-Embed. This robustness generalizes effectively to the video domain, including temporal action understanding~\citep{zhu2024dual}. As shown in Table~\ref{tab:mmeb_video_compare}, FreeRet-embed-2B outperforms VLM2Vec-V2-2B by 8.4\% in video classification and 2.9\% in video retrieval. These results demonstrate that our controlled-generation strategy successfully mitigates lexicalization pressure and generalizes across modalities without the need for the extensive in-domain fine-tuning required by prior approaches.

\textbf{Reranking Performance.} Incorporating the reranking stage yields substantial additional gains. On MMEB, the full FreeRet pipeline attains an average accuracy of 70.7\% ($r50$), outperforming strong two-stage baselines like MM-Embed and LamRA by approximately 13\%. This margin is particularly significant given that these baselines are fine-tuned on millions of multimodal pairs. Furthermore, FreeRet matches the performance of UniME-V2 (69.6\%), a model trained directly on MMEB with advanced optimization techniques. Similar trends are observed in the video tasks, where the addition of reranking further solidifies the lead of FreeRet over supervised counterparts.

Overall, these results highlight two key insights: (i) post-training approaches rely heavily on supervised data and often exhibit limited generalization under domain or modality shifts; (ii) in contrast, FreeRet exhibits robust zero-training performance across datasets, tasks, and modalities, establishing a new standard for training-free multimodal retrieval.

\subsection{Ablation Study}

\begin{table*}[t]
\centering
\caption{\textbf{Ablation study of FreeRet} with Qwen2.5-VL (3B and 7B). Results are reported as Precision@1, averaged over 36 MMEB datasets. The baseline configuration is highlighted in \hlc{mediumgray}{gray}, while the FreeRet configuration is shown in \hlc{mildgreen}{green}.}
\vspace{-1mm}
\begin{subtable}[t]{0.33\linewidth}
    \centering
    \caption{Alleviate lexicalization pressure.}
    \vspace{-1mm}
    {\fontsize{9}{10}\selectfont
    \renewcommand{\arraystretch}{0.9}
\resizebox{0.7\textwidth}{!}{
    \begin{tabular}{c cc}
Embedding  & 3B & 7B \\
\midrule
\rowcolor{mediumgray!70} $h^{\text{MLP}}_{L}$ & 45.34 & 47.97 \\
\rowcolor{mildgreen!70}  $h^{\text{Attn}}_{L}$ & 50.67 & \textbf{53.68} \\
$h^{\text{MLP}}_{L-1}$ & \textbf{51.04} & 51.03 \\
$h^{\text{MLP}}_{L-2}$ & 50.64 & 48.78 \\
   \end{tabular}} \label{tab:ablation_lexicalization_pressure}
    }
\end{subtable}
\hfill
\begin{subtable}[t]{0.33\linewidth}
    \centering
    \caption{Control embedding generation.}
    \vspace{-1mm}
    {\fontsize{9}{10}\selectfont
    \renewcommand{\arraystretch}{1}
    \resizebox{\textwidth}{!}{
\begin{tabular}{lcc}
Configuration & 3B & 7B \\
\midrule
\rowcolor{mediumgray!70} (a): base instruct \cref{eq:simple_one_word} & 42.42 & 45.53\\
(b): (a) + semantic ground & 46.71 & 50.60\\
(c): (b) + noise suppress & 48.20 & 51.50 \\
\rowcolor{mildgreen!70} (d): (c) + task align & \textbf{50.67} & \textbf{53.68} \\
\end{tabular}}
\label{tab:control_embed_generation}
    }
\end{subtable}
\hfill
\begin{subtable}[t]{0.32\linewidth}
    \centering
    \caption{Neutralize LLM framing effect.}
    \vspace{-1mm}
    {\fontsize{9}{10}\selectfont
    \renewcommand{\arraystretch}{0.95}
\resizebox{0.85\textwidth}{!}{
    \begin{tabular}{ccc}
Label framing  & 3B & 7B\\
\midrule
Right-Wrong & 59.46 & 64.71\\
\rowcolor{mediumgray!70} Yes-No & 58.39 & 65.28\\
True-False & 60.06 & 66.71\\
\rowcolor{mildgreen!70}
Multiple-choice & \textbf{60.31} &\textbf{70.72}\\
\hfill
    \end{tabular}} \label{tab:frame_effect}
    }
\end{subtable}
\vspace{-2mm}
\end{table*}

\paragraph{Effect of the Final MLP Layer.}
We first probe the role of the final MLP layer, which we posit introduces lexicalization pressure that degrades embedding quality. As a baseline, we adopt the hidden state from the last transformer layer ($h^{\text{MLP}}_{L}$), a standard choice in prior work. We then ablate at three depths: (i) bypassing the final MLP by using $h^{\text{Attn}}_{L}$, (ii) removing the full last transformer layer by using $h^{\text{MLP}}_{L-1}$, and (iii) removing the last two transformer layers by using $h^{\text{MLP}}_{L-2}$. 
Results in~\cref{tab:ablation_lexicalization_pressure} show that eliminating only the final MLP yields consistent gains: improving the 3B and 7B models by 5.33\% and 5.71\%, confirming that lexicalization pressure indeed harms representation quality. However, discarding additional layers fails to bring further benefits; the 7B variant even degrades. We hypothesize that the deeper layers capture essential semantic abstractions. The effect is magnified in shallower architectures such as Qwen2.5-VL 7B (28 layers) compared to the 3B variant (36 layers).

\paragraph{Controlling Embeddings via Prompts.}
We then investigate whether embeddings can be guided at inference without altering parameters. Starting from a simple instruction (\cref{eq:simple_one_word}), we progressively add lightweight constraints (see~\cref{tab:control_embed_generation}). Explicit semantic grounding substantially improves alignment, yielding 4.29\% and 5.07\% gains for 3B and 7B models. Adding noise-suppression instructions further attenuates spurious function words, giving another increase of 1.49\% and 0.9\%. Finally, encoding task-specific priors yields an additional boost of 2.47\% and 2.17\%. Together, these results demonstrate that the embedding space of large models can be effectively reshaped through prompting control, offering a parameter-free avenue for fine-grained representation steering.

\paragraph{Study on the LLM Framing Effect.}
Next, we test whether the surface framing of reranking questions impacts performance.
As shown in~\cref{tab:frame_effect}, semantically neutral labels such as \textit{True/False} consistently outperform alternatives with that carry social or pragmatic connotations, such as \textit{Yes/No} or \textit{Right/Wrong}. Our multiple-choice framing achieves the highest accuracy, due to its neutrality as well as consistency with pretraining distributions, where such formats are frequent.
Interestingly, sensitivity increases with model scale. The 7B variant shows large variance across framings, while the 3B model remains relatively stable. We conjecture that larger models capture finer semantic distinctions and are thus more vulnerable to subtle framing shifts, whereas smaller models operate on coarser abstractions less affected by such perturbations.

\begin{figure}
  \centering
  \includegraphics[width=0.47\textwidth]{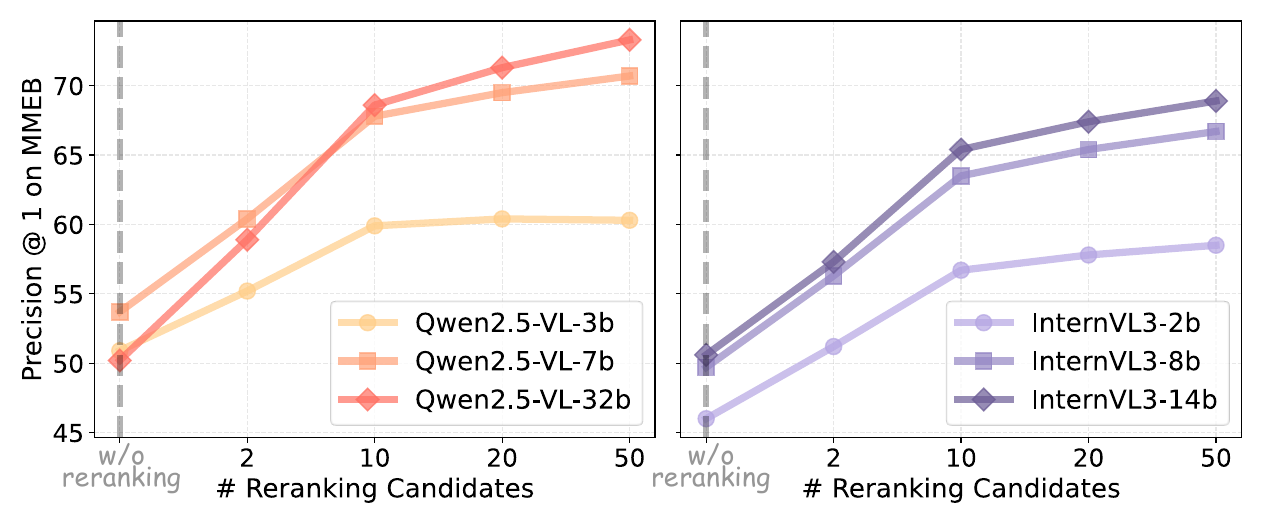}
  \vspace{-3mm}
  \caption{\textbf{Varying the number of reranking candidates.}}
  \label{fig:test_time_scale}
  \vspace{-7mm}
\end{figure}

\begin{figure*}[t]
\centering
\includegraphics[width=0.9\textwidth]{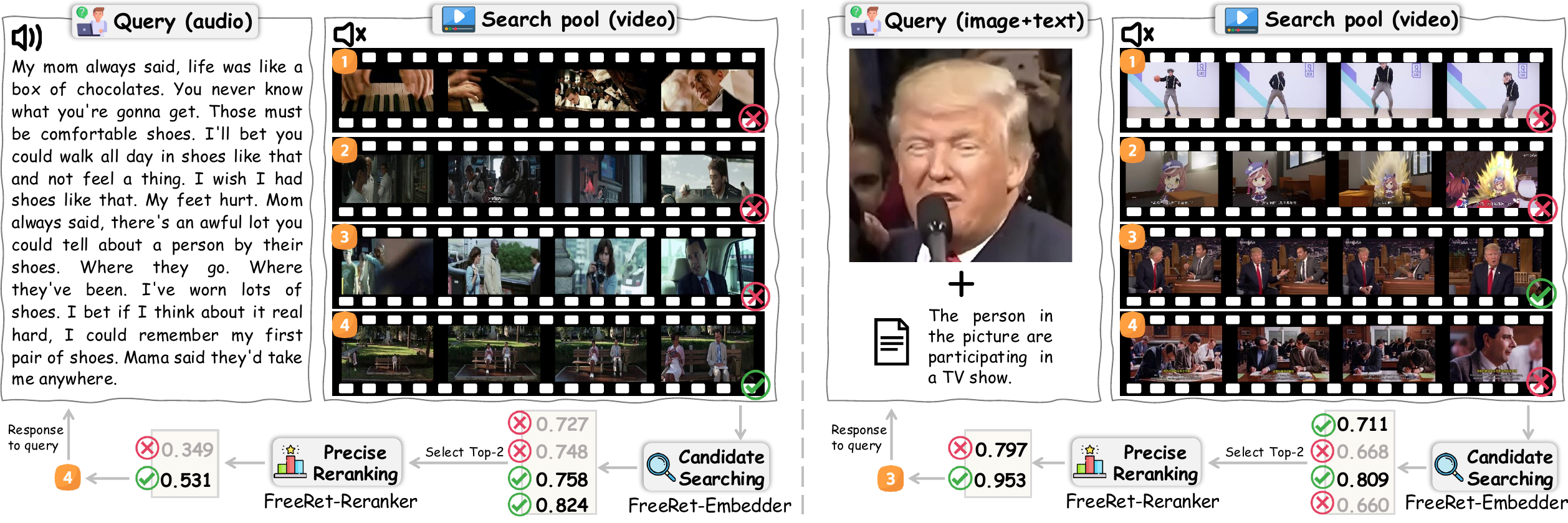}
\vspace{-1mm}
\caption{\textbf{FreeRet enables instant omni-modal retrieval} with omni-modality models. Illustrated with Qwen2.5-Omni: audio-to-video retrieval (left); image+text to video retrieval (right).}
\label{fig:omni-modality-retrieval}
\vspace{-3mm}
\end{figure*}

\paragraph{Impact of the Reranking Stage.}
Finally, we examine the contribution of our reranking stage. Prior multimodal retrievers typically optimize the embedding model while underexploring reranking, yet accurate candidate selection heavily depends on this step. In~\cref{fig:test_time_scale}, we vary the number of candidates passed to the reranking stage. Benchmark performance consistently improves as the reranking pool enlarges, underlining its necessity. Notably, our FreeRet reuses the same model across both embedding and reranking, avoiding additional parameterization or deployment overhead. This yields a favorable balance between precision and efficiency, positioning reranking as an indispensable yet economical design choice.

\subsection{Discussions on Training-Free Advantages}

\paragraph{Instant Deployment.}
A key strength of the training-free paradigm is its ability to turn any MLLM into a retriever immediately, with no additional fine-tuning. This property allows practitioners to flexibly leverage diverse model families and scales depending on task requirements. Given the rapid pace of new MLLM releases, the cost of repeated fine-tuning quickly becomes prohibitive. By eliminating this overhead, our approach enables faster adoption of new advances.
Moreover, the training-free nature broadens MLLM evaluation beyond conventional QA objectives to include retrieval-oriented benchmarks, offering a richer assessment of their multimodal reasoning capacity.

\paragraph{Scalable Omni-Modality Retrieval.}
Training-based multimodal retrieval encounters a fundamental scalability bottleneck: each new modality demands extensive paired data covering all query–target configurations. The data requirement grows combinatorially. For example, with four modalities (text, image, video, audio), there exist $\sum_{i=1}^{4}\binom{4}{i} = 15$ possible modality combinations, leading to $15 \times 15 = 225$ query–target pair types. This cost escalates further as modalities increase, rendering full coverage infeasible in practice.  
Our method sidesteps this limitation by directly exploiting the intrinsic omni-modal understanding already embedded in MLLMs. As shown in~\cref{fig:omni-modality-retrieval}, 
FreeRet supports retrieval across arbitrary configurations, \eg, retrieving a video using an audio query or a joint image–text query, even when the model has never been trained on such pairs. In this way, omni-modality retrieval shifts from a data-intensive challenge to a natural emergent capability.

\paragraph{Preserving Multimodal Intelligence.}
Unlike fine-tuning pipelines that risk eroding a model’s pretrained strengths, FreeRet preserves the multimodal reasoning and conversational abilities of the underlying MLLM. Retrieval is introduced as an additional functionality, not as a replacement. Consequently, the pretrained capabilities remain intact while retrieval emerges as a first-class operation. This design enables a single model to natively support retrieval, reranking, and generation within a unified RAG framework, avoiding the fragmentation introduced by specialized expert modules. The result is both a reduction in engineering complexity and an increase in deployment efficiency.

\begin{table}[t]
\centering
\caption{\textbf{FreeRet for long video understanding:} Qwen2.5-VL on LVBench.}
\label{tab:long_video_exp}
\renewcommand{\arraystretch}{0.95}
\vspace{-2mm}
\resizebox{0.32\textwidth}{!}{
\begin{tabular}{c|c|c}
 \textbf{Sample Method} & \textbf{\# Frames} & \textbf{LVBench} \\
 \hline
 Uniform &  64 &  39.0 \\
\rowcolor{softgreen!70} FreeRet &  64 &  44.8 (\textbf{+5.8}) \\
 Uniform &  32 &  39.0 \\
\rowcolor{softgreen!70} FreeRet &  32 &  44.2 (\textbf{+5.2}) \\
  Uniform &  16 &  36.7 \\
\rowcolor{softgreen!70} FreeRet &  16 &  42.7 (\textbf{+6.0}) \\
 \end{tabular}
}\vspace{-6mm}
\end{table}

\paragraph{Towards Long-Video Understanding.}
Reasoning over long videos is particularly challenging due to extended temporal contexts, where uniform frame sampling often dilutes attention with redundant content and overlooks motion cues critical for temporal coherence~\citep{zhang2026motion}. FreeRet offers a pragmatic solution: it first retrieves the most relevant frames, thereby grounding subsequent reasoning on evidence-rich content. This retrieval-driven focus effectively reduces temporal redundancy. On the hour-level benchmark LVBench~\citep{lvbench}, experiments with Qwen2.5-VL 7B (see~\cref{tab:long_video_exp}) demonstrate consistent improvements, aligning with recent progress on long-video MLLMs~\citep{zeng2025timesuite}. These results highlight the promise of FreeRet as a foundation for scaling MLLMs toward long-horizon multimodal reasoning.
\section{Conclusion}
This work demonstrates that pretrained MLLMs can act as effective retrieval engines without any additional training. By decoupling embedding extraction from lexical alignment, conditioning representation with explicit priors, and neutralizing framing in reranking, FreeRet turns off‑the‑shelf MLLMs into strong two‑stage retrievers. Experiments on MMEB show that FreeRet not only surpasses heavily trained baselines but also remains competitive with MMEB-supervised methods. Beyond performance, its plug‑and‑play nature preserves reasoning ability, supports arbitrary modality combinations, and integrates retrieval with generation in a single model. These results challenge the prevailing reliance on costly contrastive training and point toward a retrieval paradigm where generalist MLLMs serve as unified, training‑free backbones for multimodal reasoning and generation.

{\bf Acknowledgements.} This work is supported by the National Key R\&D Program of China (No. 2022ZD0160900), the Basic Research Program of Jiangsu (No. BK20250009), the Fundamental Research Funds for the Central Universities (No. 020214380140), the Fundamental and Interdisciplinary Disciplines Breakthrough Plan of the Ministry of Education of China (No. JYB2025XDXM118), and the Collaborative Innovation Center of Novel Software Technology and Industrialization.

\section*{Impact Statement}
This paper presents work whose goal is to advance the field of machine learning by establishing MLLMs as training-free, general-purpose retrievers. Our framework, FreeRet, reduces the need for expensive, large-scale data curation and post-hoc fine-tuning, thereby lowering the computational barrier and environmental cost of deploying advanced multimodal systems. There are many potential societal consequences of our work, none of which we feel must be specifically highlighted here.

\nocite{langley00}

\bibliography{example_paper}
\bibliographystyle{icml2026}

\newpage
\appendix
\onecolumn

\section{Additional Details On Lexicalization Pressure Experiments}

\begin{figure}[h]
\centering
\begin{subfigure}[t]{0.245\textwidth}
    \centering
    \includegraphics[width=1\textwidth]{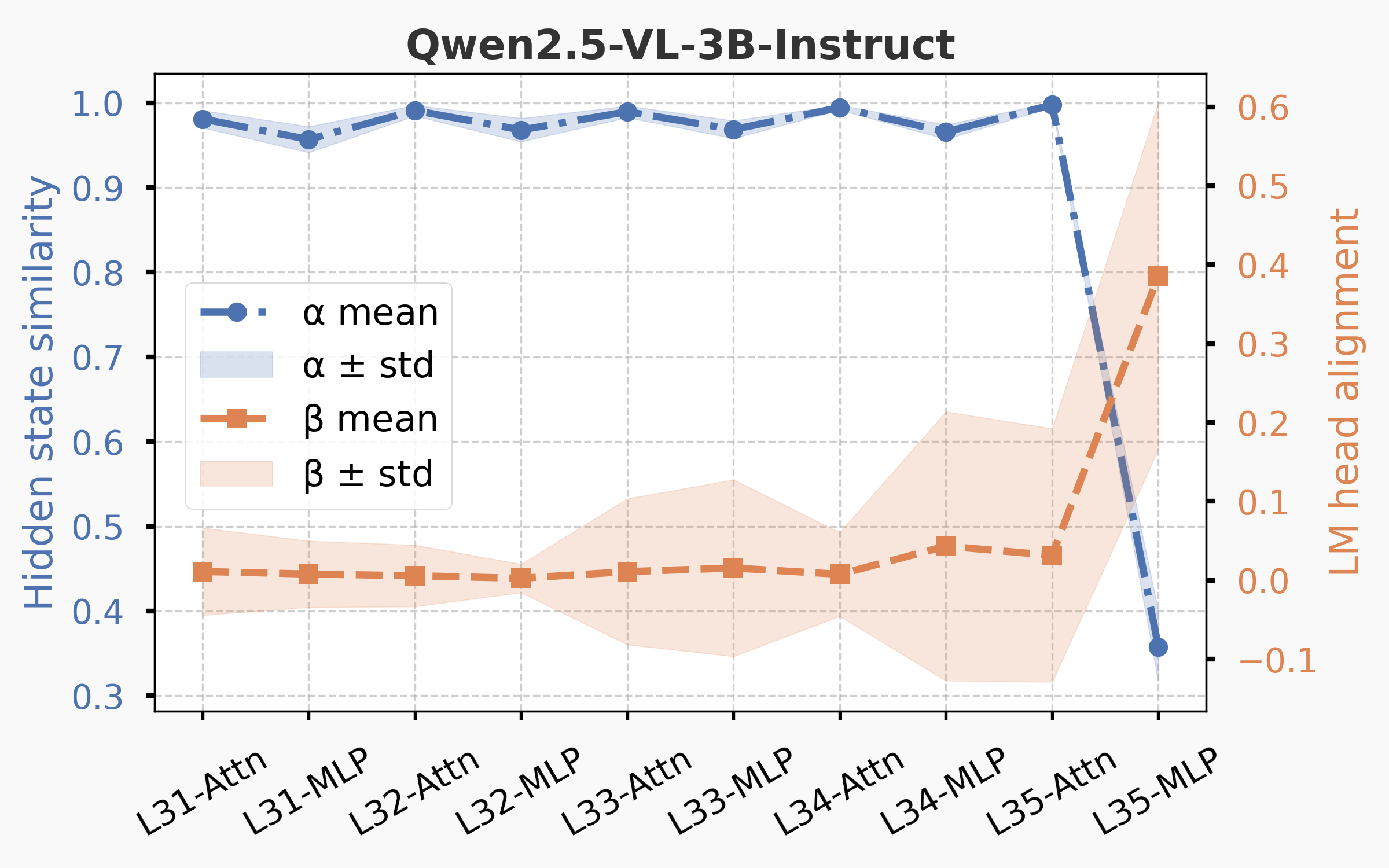}
    \vspace{0mm}
\end{subfigure}
\hfill
\begin{subfigure}[t]{0.245\textwidth}
    \centering
    \includegraphics[width=1\textwidth]{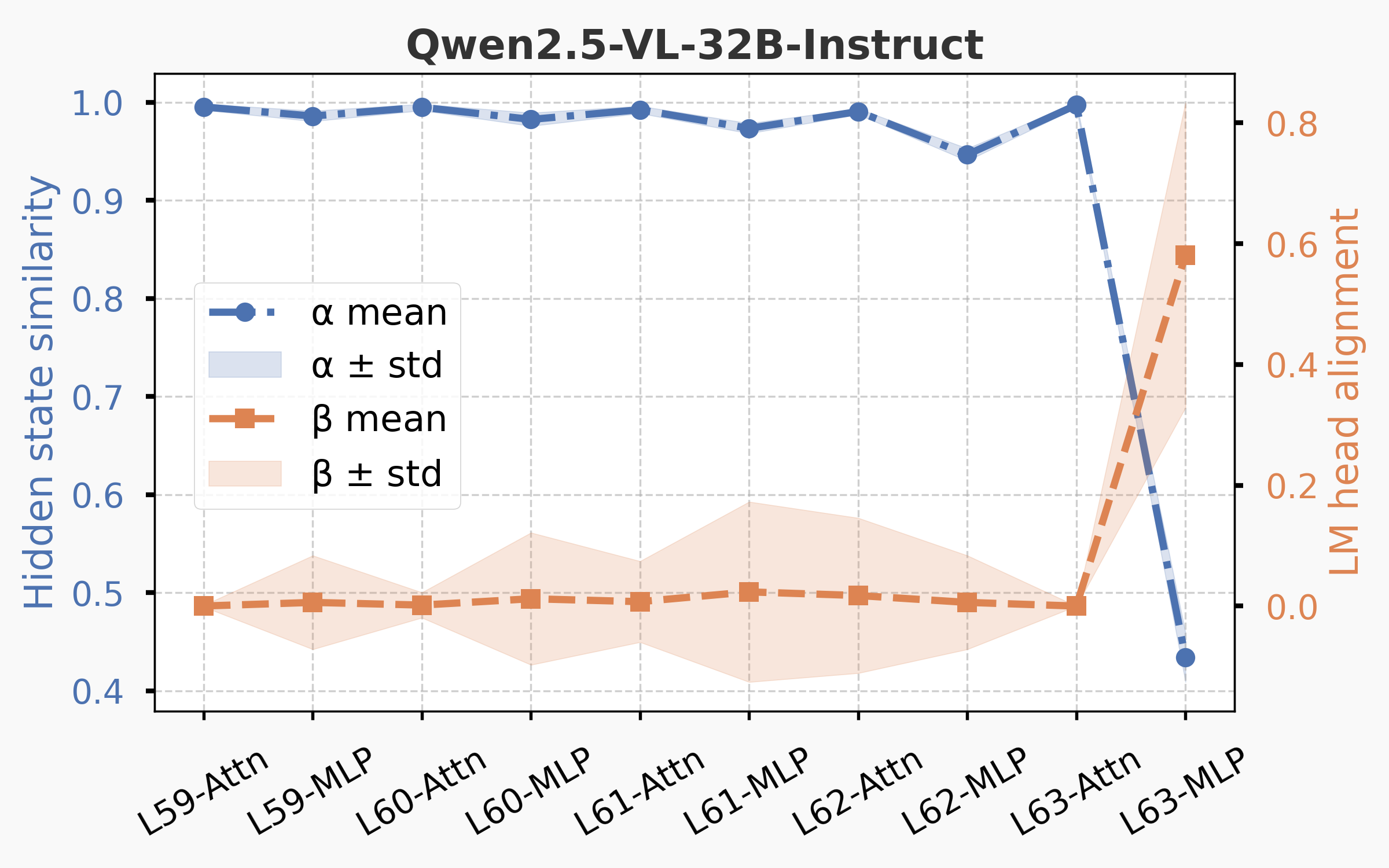}
    \vspace{0mm}
\end{subfigure}
\hfill
\begin{subfigure}[t]{0.245\textwidth}
    \centering
    \includegraphics[width=1\textwidth]{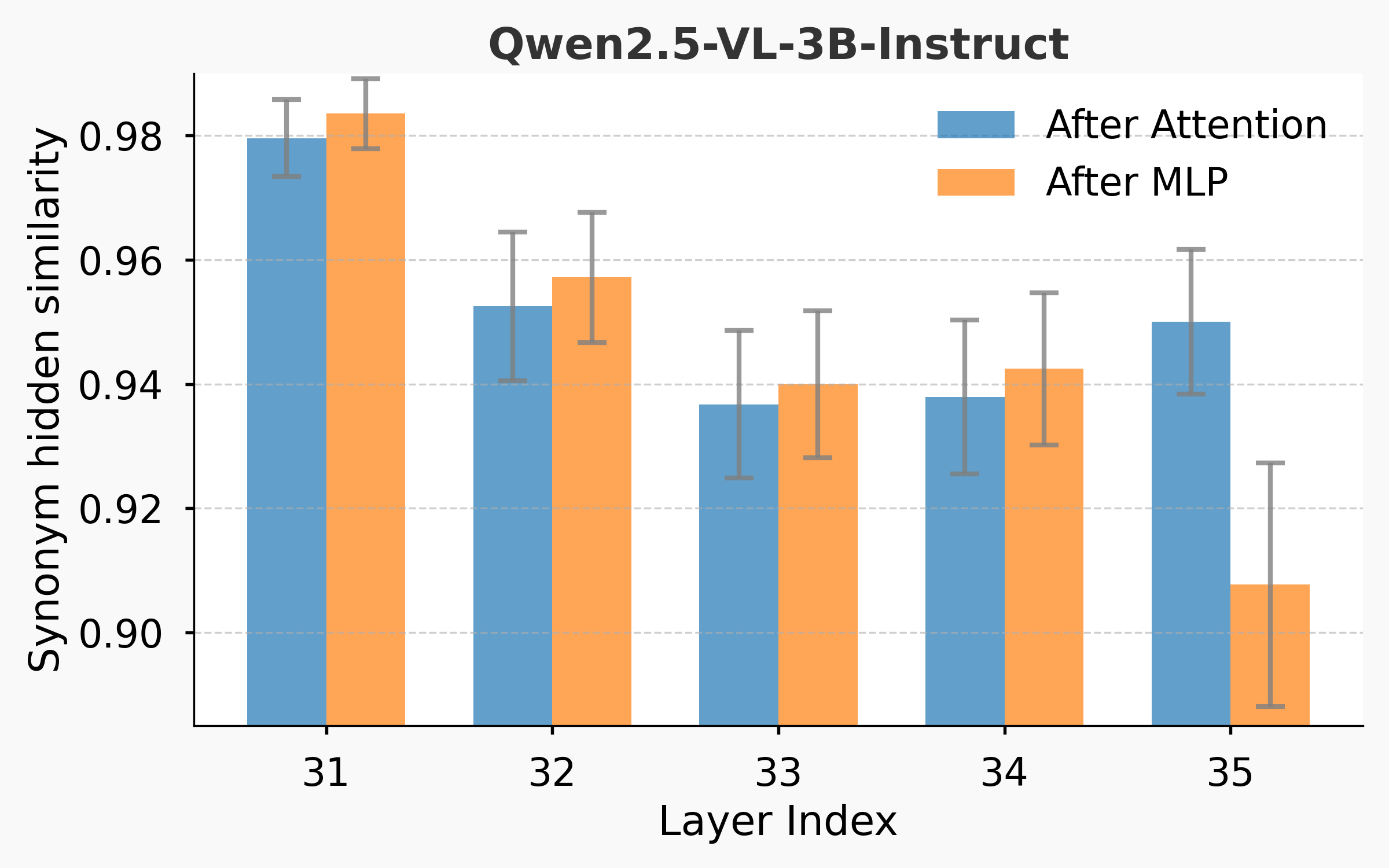}
    \vspace{0mm}
\end{subfigure}
\hfill
\begin{subfigure}[t]{0.245\textwidth}
    \centering
    \includegraphics[width=1\textwidth]{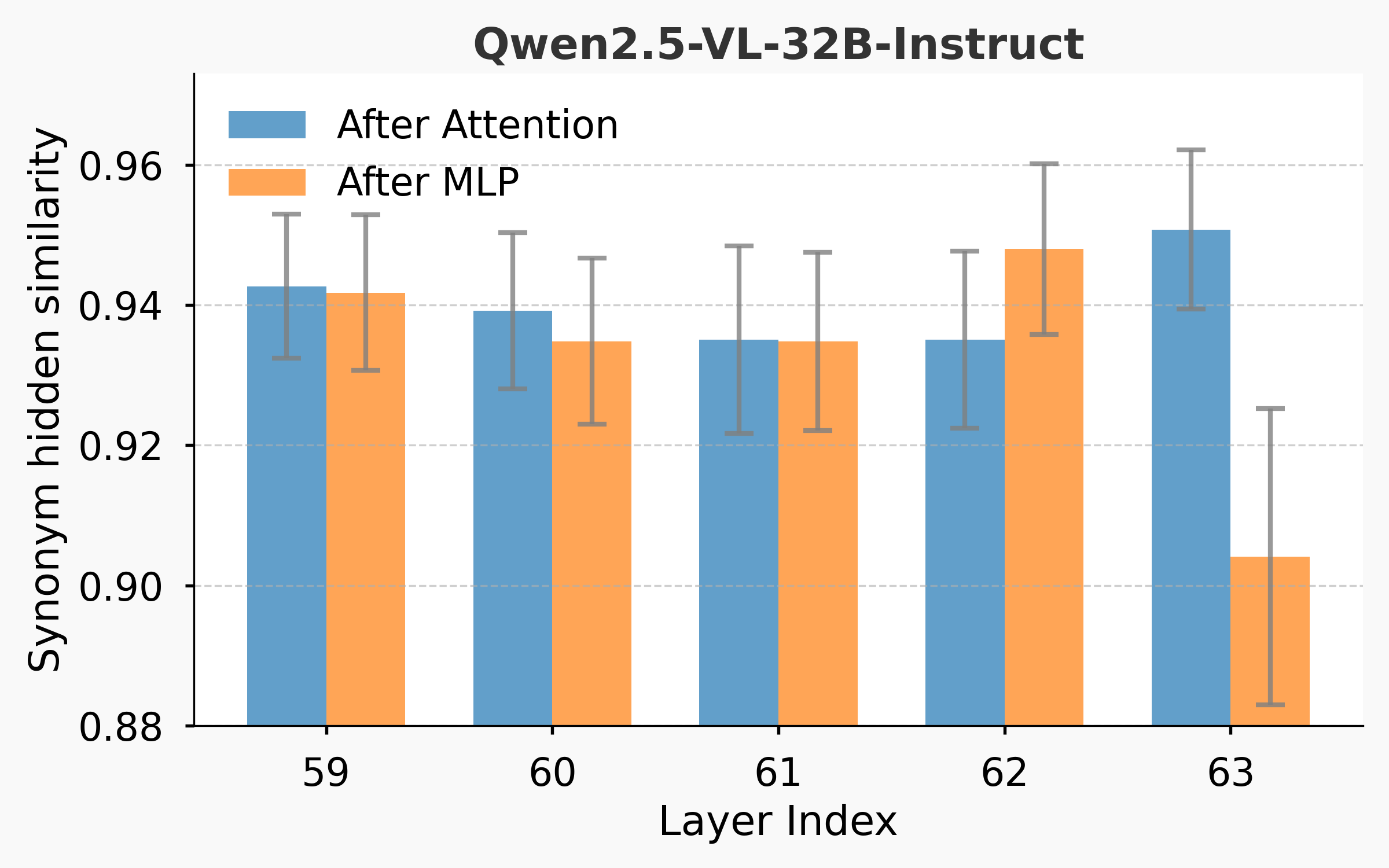}
    \vspace{0mm}
\end{subfigure}
\vspace{-5mm}
\caption{Qwen2.5-VL 3B and 32B results in probing experiments on lexicalization pressure.}
\label{sup_fig:lexical}
\end{figure}

\begin{table}[h]
\centering
\caption{Ablate the choice of removing the final MLP layer on models beyond Qwen. We report average Precision@1 scores on the MMEB benchmark.}
\vspace{-3mm}
\resizebox{0.5\textwidth}{!}{
\begin{tabular}{c|c c c}
\toprule

Embedding & LLaVA-OV-1.5-8B & InternVL3-2B & InternVL3-8B \\ \midrule

$h^{\text{MLP}}_{L}$ & 47.8	 & 40.2	  & 44.2\\ 
$h^{\text{Attn}}_{L}$ & 54.2 (\textit{+6.4}) & 46.0 (\textit{+5.8}) & 49.7 (\textit{+5.5})\\
\bottomrule
\end{tabular}}
\label{tab:more_ablation_on_lexical}
\end{table}

We investigate the locus and consequences of lexicalization in multi-modal large language models through probing experiments across different model scales. In the main text, we report results on Qwen2.5-VL 7B, showing that lexicalization is strongly concentrated in the final MLP layers. This concentration appears to sharpen token-level alignment, but it comes at the cost of semantic coherence, suggesting that lexicalization exerts a structural pressure on representation quality.

To validate the robustness of this finding, we further evaluate Qwen2.5-VL 3B and 32B. As shown in~\cref{sup_fig:lexical}, all models exhibit consistent patterns: the final MLP absorbs the majority of lexicalization load and induces similar trade-offs between lexical grounding and semantic fidelity. This convergence across scales indicates that lexicalization pressure is not an artifact of model size, but instead emerges as a general structural property of the architecture.

Our evaluation involves 720 data points, sampled systematically from MMEB. Specifically, we select 20 examples from each of the 36 datasets. For all reported scores, we compute both the mean and the standard deviation, ensuring the conclusions are driven by stable trends rather than dataset-specific artifacts.

Moreover, we extend our analysis beyond the Qwen series by evaluating additional architectures, including InternVL and LLaVA. The results, summarized in~\cref{tab:more_ablation_on_lexical}, demonstrate that lexicalization pressure is a pervasive phenomenon. Importantly, our simple modification (removing the final MLP layer) consistently improves performance across diverse MLLM architectures.

\begin{figure}[h]
    \centering
\hfill
\begin{subfigure}[t]{0.42\textwidth}
\centering
    \includegraphics[width=1\linewidth]{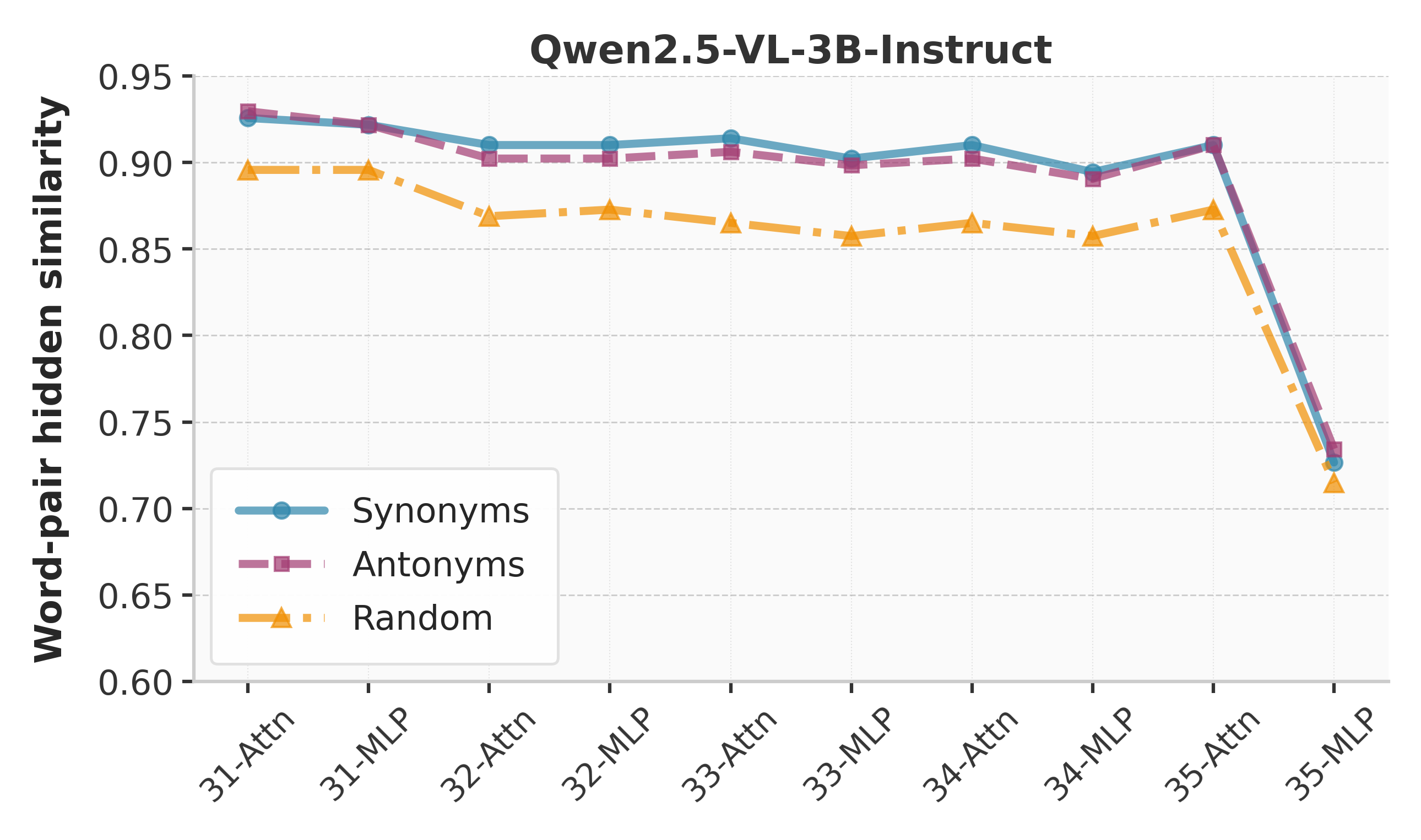}
\end{subfigure}
\hfill
\begin{subfigure}[t]{0.42\textwidth}
\centering
    \includegraphics[width=1\linewidth]{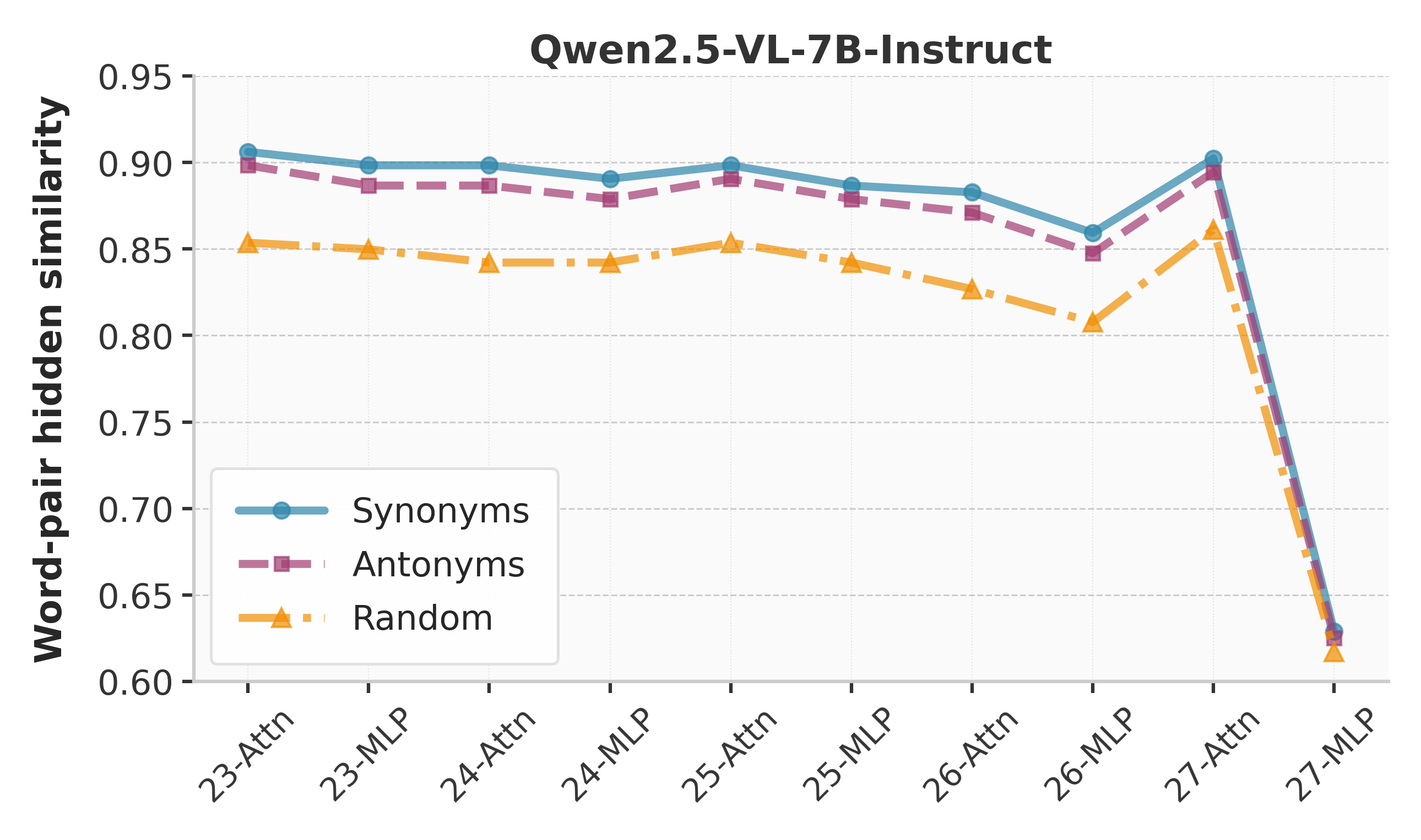}
\end{subfigure}
\hfill
    \caption{Hidden‑state similarity of synonym, antonym, and random word pairs across attention and MLP blocks in the final five transformer layers.}
\label{fig:synonym_antonym_random}
\end{figure}

To further substantiate the claim that semantic resolution degrades after the final MLP layer, we analyze hidden‑state similarities for \textit{synonym}, \textit{antonym}, and \textit{random word pairs} across every attention and MLP block within the last five transformer layers. To ensure that the evaluation genuinely reflects semantic content rather than superficial lexical overlap, we employ cross‑lingual English–Chinese word pairs with matched meaning, opposite meaning, or no semantic relation. As shown in~\cref{fig:synonym_antonym_random}, representations before last MLP preserve clear semantic structure:  synonym pairs exhibit the highest similarity; antonym pairs are only slightly lower, likely due to antonyms typically share same domain, and LLMs trained without contrastive objectives tend to cluster domain‑level meanings even when the words express opposite concepts; random word pairs show the lowest similarity, as expected. However, immediately after the final MLP layer, the gap among these three categories collapses, with all three categories converging to nearly identical similarity levels. This sharp loss of semantic separation indicates that the final MLP layer overrides upstream semantic organization, providing direct evidence that lexicalization pressure deteriorates semantic representation quality. The phenomenon aligns with the downstream performance degradation reported in~\cref{tab:ablation_lexicalization_pressure}.

\section{Generalization Performance of FreeRet}

\subsection{Generalization Across Model Families and Scales}
\begin{table*}[h]
\centering
\small
\setlength{\tabcolsep}{5pt}
\renewcommand{\arraystretch}{1.05}
\caption{\textbf{FreeRet generalizes well across model families and scales ($r=50$).}}
\vspace{-2mm}
\resizebox{0.7\textwidth}{!}{
\begin{tabular}{ll|cccc|c}
\Xhline{1.0pt}
\textbf{Model} & \textbf{MLLM} & \textbf{Classification} & \textbf{VQA} & \textbf{Retrieval} & \textbf{Grounding} & \textbf{Average} \\
\Xhline{0.7pt}
FreeRet & LLaVA-OV-7B    & 59.6 & 58.7 & 62.4 & 65.9 & 61.0 \\
FreeRet & LLaVA-OV-1.5-8B    & 62.8 & 69.2 & 65.9 & 65.3 & 65.9 \\
\Xhline{0.3pt}
FreeRet& InternVL3-2B     & 59.1 & 58.2 & 56.2 & 65.1 & 58.5 \\
 FreeRet & InternVL3-8B    & 62.3 & 68.9 & 64.1 & 79.9 & 66.7 \\
FreeRet & InternVL3-14B    & 61.1 & 71.2 & 67.9 & 85.1 & 68.9 \\
\Xhline{0.3pt}
FreeRet & Qwen2-VL-7B    & 65.6 & 64.1 & 68.7 & 75.2 & 67.2 \\
 FreeRet & Qwen2.5-VL-3B     & 57.5 & 61.2 & 62.8 & 57.8 & 60.3 \\
FreeRet & Qwen2.5-VL-7B     & 69.4 & 70.0 & 69.9 & 78.2 & 70.7 \\
FreeRet & Qwen2.5-VL-32B     & 70.8 & 75.4 & 72.4 & 77.0 & 73.3 \\
\Xhline{1.0pt}
\end{tabular}
}
\label{tab:model_family_and_scale}
\end{table*}
As shown in \cref{tab:model_family_and_scale}, FreeRet consistently improves multimodal retrieval performance across diverse model families and parameter scales. When applied to LLaVA~\cite{LLaVA-OV-1.5,llava-ov}, InternVL~\cite{Internvl3}, and Qwen‑VL~\cite{Qwen2.5-VL,Qwen2-vl} variants, FreeRet yields robust gains across classification, VQA, retrieval, and grounding tasks. Notably, larger backbone models tend to benefit more from FreeRet, with Qwen2.5‑VL‑32B achieving the highest average score of 73.3. The strong grounding performance of InternVL3‑14B (85.1) further demonstrates FreeRet’s ability to exploit InternVL3’s spatially grounded reasoning capabilities. Overall, these results indicate that FreeRet generalizes well across architectures and model scales, delivering consistent improvements without any fine‑tuning.

\subsection{Cross-Lingual Generalization}
\begin{table}[htbp]
\centering
\caption{Cross-lingual Performance Comparison.}
\vspace{-2mm}
\label{tab:mmeb_results}
\resizebox{0.8\textwidth}{!}{\begin{tabular}{l l c c c c}
\toprule
\textbf{Method} & \textbf{MLLM} & \textbf{Train Data} & \textbf{MMEB (EN)} & \textbf{MMEB (ZH)} & \textbf{MMEB (FR)} \\
\hline
LamRA (Embed + Rerank Top 10) & Qwen2.5-VL 7B & 1.4M & 55.0 & 48.3 & 45.7 \\
FreeRet (Embed + Rerank Top 10) & Qwen2.5-VL 7B & -- & 67.8 & 62.7 & 60.4 \\
\bottomrule
\end{tabular}}
\label{tab:multilingual}
\end{table}

To assess cross‑lingual generalization, we translated the MMEB benchmark into Chinese and French using Qwen3‑14B~\cite{qwen3}. We compare FreeRet with LamRA~\cite{LamRA}, which fine‑tunes Qwen2.5‑VL on 1.4M high‑quality multimodal pairs for both embedding and reranking. Both methods use exactly the same prompts as in the English setting. As shown in \cref{tab:multilingual}, FreeRet maintains strong performance across languages without any prompt modification or additional training. The observed performance drop relative to English is expected, and is likely attributable to translation noise and mismatches between translated queries and English text appearing in images.

\subsection{Generalization to Standard Retrieval Benchmarks}

\begin{table*}[t]
\centering
\setlength{\tabcolsep}{5pt}
\renewcommand{\arraystretch}{1}
\caption{Retrieval performance on Flickr30K and COCO datasets.}
\vspace{-1mm}
\resizebox{1.0\textwidth}{!}{
\begin{tabular}{l c c | c c c c c c | c c c c c c}
\toprule
 &  &  & \multicolumn{6}{c|}{\textbf{Image Retrieval}} & \multicolumn{6}{c}{\textbf{Text Retrieval}} \\
\cmidrule(lr){4-9} \cmidrule(lr){10-15}
\textbf{Method} & \textbf{Train Data} & \textbf{MLLM}
& \multicolumn{3}{c}{\textbf{Flickr30K}} & \multicolumn{3}{c|}{\textbf{COCO}}
& \multicolumn{3}{c}{\textbf{Flickr30K}} & \multicolumn{3}{c}{\textbf{COCO}} \\
\cmidrule(lr){4-6} \cmidrule(lr){7-9}
\cmidrule(lr){10-12} \cmidrule(lr){13-15}
 &  & 
& R@1 & R@5 & R@10 & R@1 & R@5 & R@10
& R@1 & R@5 & R@10 & R@1 & R@5 & R@10 \\
\midrule
E5V & -- & Qwen2.5-VL 3B
& 47.1 & 71.4 & 79.8 & 24.2 & 47.3 & 58.4
& 59.3 & 81.3 & 88.4 & 35.6 & 59.7 & 70.5 \\

E5V & -- & Qwen2.5-VL 7B
& 50.2 & 75.4 & 82.3 & 27.0 & 50.8 & 61.6
& 60.3 & 82.9 & 89.9 & 35.7 & 60.6 & 71.6 \\

VLM2Vec & 1M & Qwen2-VL 7B
& 69.3 & 90.3 & 94.3 & 47.2 & 72.6 & 81.1
& 87.4 & 97.7 & 99.1 & 60.6 & 82.7 & 89.6 \\

\rowcolor{softblue} FreeRet-embed & -- & Qwen2.5-VL 3B
& 55.0 & 79.4 & 86.0 & 29.8 & 54.4 & 65.4
& 67.4 & 88.8 & 93.2 & 42.0 & 67.4 & 77.2 \\

\rowcolor{mildblue} FreeRet$_{r20}$ & -- & Qwen2.5-VL 3B
& 77.0 & 89.6 & 90.8 & 49.8 & 69.5 & 74.0
& 79.9 & 94.3 & 95.7 & 55.3 & 79.6 & 84.1 \\

\rowcolor{mildblue} FreeRet$_{r20}$ & -- & Qwen2.5-VL 3B
& 79.1 & 93.2 & 94.9 & 52.0 & 75.2 & 81.8
& 80.8 & 96.8 & 97.9 & 53.8 & 81.3 & 88.0 \\
\bottomrule
\end{tabular}}
\label{tab:coco_flickr_retrieval_results}
\end{table*}

To further evaluate generalization beyond MMEB, we benchmark FreeRet on Flickr30K~\cite{flickr30k} and MS‑COCO~\cite{coco}. We compare FreeRet, FreeRet‑embed (FreeRet without reranking), and two representative baselines: E5‑V (training‑free) and VLM2Vec (training‑based, optimized on MMEB‑662K, which includes COCO). As shown in \cref{tab:coco_flickr_retrieval_results}, even with a 3B backbone, FreeRet‑embed substantially outperforms E5‑V on both datasets. Incorporating the reranking stage yields large additional gains, enabling FreeRet to achieve performance comparable to VLM2Vec while requiring no task‑specific training.

\section{Additional Theoretical Analysis}
\label{sec:supplement-theory}

We provide a simplified argument illustrating why discarding the final MLP layer can preserve semantic fidelity of hidden states.

\begin{lemma}[Lexicalization Alignment]
Let $h \in \mathbb{R}^{d}$ be the hidden state before the last MLP, and define
\[
h' = Ah + b, \quad A \in \mathbb{R}^{d \times d}, \; b \in \mathbb{R}^{d}.
\]
Let $\mathbf{W} = [\mathbf{w}_{1}, \dots, \mathbf{w}_{|V|}] \in \mathbb{R}^{d \times |V|}$ be the LM head and
\[
\mathcal{L}(h') = - \log \frac{\exp(\mathbf{w}_{y^*}^\top h')}{\sum_{v \in V} \exp(\mathbf{w}_{v}^\top h')}
\]
the cross-entropy loss. Then the gradient is
\[
\nabla_{h'} \mathcal{L} = \sum_{v \in V} p(v|h') \mathbf{w}_{v} - \mathbf{w}_{y^*},
\]
which points toward aligning $h'$ with $\mathbf{w}_{y^*}$ while suppressing components orthogonal to $\mathrm{span}(\mathbf{W})$.
\end{lemma}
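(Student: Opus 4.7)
The plan is to reduce the proof to two routine steps—a direct gradient computation via the log-sum-exp identity, followed by a geometric decomposition that makes the alignment and suppression claims precise. Since the map $h \mapsto h' = Ah + b$ is affine and fixed, I would work throughout with $h'$ as the free variable and recover statements about $h$ by the chain rule only at the end if needed.

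First, I would rewrite the loss as
\[
\mathcal{L}(h') \;=\; -\mathbf{w}_{y^*}^\top h' \;+\; \log\!\sum_{v \in V} \exp(\mathbf{w}_{v}^\top h'),
\]
so the linear term contributes the constant gradient $-\mathbf{w}_{y^*}$, and the log-sum-exp term differentiates, by the standard softmax derivative, to $\sum_{v} p(v|h')\,\mathbf{w}_{v}$, where the weights are exactly the softmax probabilities defined by the LM head. Summing gives the claimed formula $\nabla_{h'}\mathcal{L} = \sum_{v} p(v|h')\,\mathbf{w}_{v} - \mathbf{w}_{y^*}$. The only care required is checking the normalization constant of the softmax drops out cleanly, which it does because the denominator is independent of the summation index.

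Next, for the geometric interpretation, I would decompose $\mathbb{R}^{d} = \mathrm{span}(\mathbf{W}) \oplus \mathrm{span}(\mathbf{W})^{\perp}$ and write $h' = h'_{\parallel} + h'_{\perp}$. Because each logit $\mathbf{w}_{v}^{\top} h' = \mathbf{w}_{v}^{\top} h'_{\parallel}$ depends only on the parallel component, $\mathcal{L}$ is invariant under perturbations of $h'_{\perp}$. Equivalently, the gradient derived above is a convex-minus-one combination of columns of $\mathbf{W}$, so $\nabla_{h'}\mathcal{L} \in \mathrm{span}(\mathbf{W})$. This yields the two geometric claims: the gradient step $h' \leftarrow h' - \eta \nabla_{h'}\mathcal{L}$ pulls $h'$ toward $\mathbf{w}_{y^*}$ while pushing it away from a softmax-weighted centroid of the competing columns, and any component of $h'$ orthogonal to $\mathrm{span}(\mathbf{W})$ receives no gradient signal.

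The main obstacle is making the phrase \emph{``suppressing components orthogonal to $\mathrm{span}(\mathbf{W})$''} rigorous, since the per-step gradient actually leaves $h'_{\perp}$ untouched rather than shrinking it. I would address this by recasting suppression as a functional rather than pointwise statement: $h'_{\perp}$ lies in the kernel of the prediction map and therefore carries no training signal, so under any standard implicit bias or weight-decay regularization the rows of $A$ producing orthogonal content are driven to decay relative to those producing parallel content. This is enough to justify the lemma's informal conclusion while keeping its formal content restricted to the explicit gradient identity, and it ties the lemma back to the empirical observation in Section~\ref{lexicalization_pressure} that the final MLP is where representations collapse onto the lexical subspace.
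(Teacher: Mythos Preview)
Your proposal is correct and follows essentially the same route as the paper: compute the softmax gradient explicitly, observe that it lies in $\mathrm{span}(\mathbf{W})$, and conclude that the orthogonal component receives no training signal and is therefore ``uncontrolled by the objective.'' Your treatment is in fact more careful than the paper's, which simply asserts the gradient formula and the orthogonality of $\epsilon$ without the log-sum-exp derivation or the implicit-bias/weight-decay justification for suppression; the paper's proof leaves ``diminishing the effective role of $\epsilon$'' just as informal as the lemma statement itself.
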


\begin{proof}
Since $\nabla_{h'} \mathcal{L}$ is always a linear combination of vocabulary vectors $\{\mathbf{w}_{v}\}$, optimization over $\mathcal{L}$ depends only on the projection of $h'$ onto $\mathcal{W} = \mathrm{span}\{\mathbf{w}_{v}\}$. Any orthogonal component $\epsilon = h' - P_{\mathcal{W}}(h')$ satisfies $\nabla_{h'} \mathcal{L}^\top \epsilon = 0$ and is thus uncontrolled by the objective. Training therefore drives $h'$ to maximize $\langle h', \mathbf{w}_{y^*}\rangle$ while diminishing the effective role of $\epsilon$.
\end{proof}

\begin{corollary}[Lexicalization Pressure]
The final MLP layer learns $h' \approx P_{\mathcal{W}}(h)$, which increases alignment with the lexical head but reduces retention of semantic features outside $\mathcal{W}$.
\end{corollary}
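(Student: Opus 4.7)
The plan is to leverage the gradient structure already established in the Lemma and turn its consequence for $h'$ into a structural constraint on the MLP map $h \mapsto Ah+b$ under a mild regularization assumption. Concretely, since $\nabla_{h'} \mathcal{L}$ always lies in $\mathcal{W} = \mathrm{span}\{\mathbf{w}_v\}$, the loss depends on $(A,b)$ only through the quantities $\{\mathbf{w}_v^{\top}(Ah+b)\}_{v \in V}$, so any modification of $(A,b)$ whose effect on $h'$ lies in $\mathcal{W}^{\perp}$ is invisible to the training objective. I would then combine this loss-invariance with a minimum-norm/weight-decay tiebreaker to conclude that the trained MLP's output sits (approximately) in $\mathcal{W}$, and use a residual-style argument to identify that output with $P_{\mathcal{W}}(h)$.

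The first step is to decompose $A = P_{\mathcal{W}} A + (I-P_{\mathcal{W}})A$ and $b = P_{\mathcal{W}} b + (I-P_{\mathcal{W}}) b$ and verify, via the Lemma, that the orthogonal summands do not enter $\mathcal{L}$. The second step is to invoke standard $\ell_2$ regularization (or the implicit bias of SGD toward minimum-norm solutions) to argue that $(I-P_{\mathcal{W}})A$ and $(I-P_{\mathcal{W}})b$ are driven to zero, because they strictly add parameter norm without changing the loss. The third step is to combine these two observations to obtain $h' = Ah + b \in \mathcal{W}$ at convergence, and then to decompose $h = P_{\mathcal{W}}(h) + \epsilon$ with $\epsilon \in \mathcal{W}^{\perp}$ and note that the lemma showed $\nabla_{h'} \mathcal{L}^{\top}\epsilon = 0$, so the contribution of $\epsilon$ to $h'$ is entirely determined by the (regularized) MLP rather than by any loss signal. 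The final step is to invoke the transformer's residual / near-identity block structure so that $\mathrm{MLP}(h)$ acts by contracting the $\mathcal{W}^{\perp}$ direction while preserving the $\mathcal{W}$ direction, which yields $h' \approx P_{\mathcal{W}}(h)$ as a first-order statement.

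The main obstacle I anticipate is the last step: the Lemma pins down only the range of the effective map up to $\mathcal{W}^{\perp}$, not a specific vector, so the target identity $h' \approx P_{\mathcal{W}}(h)$ does not follow from gradient geometry alone. Making it rigorous requires one of (i) an explicit residual connection $h' = h + \mathrm{MLP}(h)$ combined with the preceding argument that $\mathrm{MLP}(h) \in \mathcal{W}$, from which $h' - P_{\mathcal{W}}(h) = (I-P_{\mathcal{W}})h$ is the component the corollary claims is suppressed; (ii) an extra minimum-norm condition on $h'$ itself, which selects $P_{\mathcal{W}}(h)$ among all in-$\mathcal{W}$ candidates; or (iii) reading the corollary as the asymptotic / regularized regime consistent with the empirical probing in Section~3.2. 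I would state the chosen assumption explicitly and frame the corollary as describing this regularized limit, which is the cleanest way to reconcile the informal $\approx$ with what the Lemma actually guarantees.
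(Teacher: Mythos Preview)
The paper does not supply a separate proof for this corollary: it is asserted directly after the Lemma as an informal consequence, with the only supporting reasoning being the Lemma's own proof (that $\nabla_{h'}\mathcal{L}\in\mathcal{W}$, so the orthogonal component of $h'$ is ``uncontrolled by the objective'' and training ``diminish[es] the effective role of $\epsilon$''). There is no parameter decomposition, no explicit regularization argument, and no attempt to pin down why the in-$\mathcal{W}$ image should specifically be $P_{\mathcal{W}}(h)$ rather than some other vector in $\mathcal{W}$.

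Your proposal is therefore substantially more careful than what the paper offers. The decomposition of $(A,b)$ into $\mathcal{W}$ and $\mathcal{W}^{\perp}$ pieces, the loss-invariance step, and the weight-decay/minimum-norm tiebreaker are all additions you supply; the paper implicitly leans on the same intuition but never states it. You also correctly flag the genuine gap that the paper simply ignores: gradient geometry plus regularization gives $h'\in\mathcal{W}$ (approximately), but does not single out $P_{\mathcal{W}}(h)$ among all elements of $\mathcal{W}$ without a further structural assumption such as a residual connection or a minimum-norm selection on $h'$ itself. Your plan to state that assumption explicitly and frame the corollary as a regularized/asymptotic heuristic is the honest resolution; the paper's $\approx$ is doing exactly the same work, just without acknowledgment. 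In short, your route is different mainly in that it is more explicit, and it surfaces a weakness the paper leaves unaddressed.
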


\begin{remark}[Practical Implication]
By discarding the final MLP when producing embeddings, we bypass the forced projection into $\mathcal{W}$, thereby retaining semantic components of $h$ that would otherwise be suppressed by lexicalization pressure. This aligns with our empirical findings in~\cref{fig:lexical-2}.
\end{remark}

\section{Additional Details On Word-level Probability Visualization}

To better understand the representational behavior of our model, we visualize word-level probabilities as shown in~\cref{fig:word_visualize}. Although our main method removes the final MLP layer to ensure more faithful embeddings, we reintroduce this layer solely for visualization purposes. Specifically, we pass the hidden states through the original language modeling head followed by a softmax, which produces interpretable token-wise probabilities.  

This procedure allows us to probe the model’s internal distribution over the vocabulary without altering the underlying training or inference pipeline. Compared with E5V, the key difference lies in our prompt design, which directly shapes the representation generation process. This design choice provides a clearer window into how our approach influences semantic alignment, highlighting the improvements our method achieves over prior baselines.

\section{Additional Details on LLM Framing Effect Experiments}
For benchmark evaluation, we employ the ImageNet-1K~\citep{deng2009imagenet} subset from the MMEB benchmark. To ensure robustness, we rephrase the shared prefix (e.g., the prompt question) three times and report the average accuracy. 

For the context-free instruction setting, we mitigate position-related biases by swapping the order of the labels. For instance, we alternate between instructions such as ``\texttt{Reply only with `Yes' or `No'}'' and ``\texttt{Reply only with `No' or `Yes'}''. We then average the corresponding logits to minimize the influence of positional effects in the instruction text.

\section{The Potential of FreeRet in Massive-Candidate Scenarios}
\label{sub_sec:massive_candidate}
In large-scale retrieval, inference efficiency is often as critical as accuracy. While FreeRet, built upon modern MLLMs, achieves strong performance, it incurs higher latency compared to CLIP-based methods, a limitation also noted in prior MLLM-based retrievers~\citep{MM-Embed,LamRA,DeKR}.

To ensure scalability in scenarios with over 100M candidates (a setting relevant to many real-world applications), we propose three orthogonal strategies, readily applicable to FreeRet and other MLLM-based retrievers: 
\begin{enumerate}[noitemsep]\vspace{-4mm}
    \item \textbf{Coarse pre-filtering.} Employing an extremely lightweight model for initial filtering reduces the effective candidate pool before invoking MLLM-based retrieval and reranking.
    \item \textbf{Controlled reranking.} Since reranking dominates inference cost, limiting the number of reranked candidates yields significant efficiency gains with minimal performance loss (see~\cref{fig:test_time_scale}).
    \item \textbf{Lightweight MLLMs.} Substituting the backbone with more efficient MLLMs provides further savings, and ongoing progress in model compression suggests increasingly favorable trade-offs in the near future.
\end{enumerate}

\section{Error analysis}
\begin{figure}[h]
    \centering
    \vspace{-2mm}
    \includegraphics[width=0.95\linewidth]{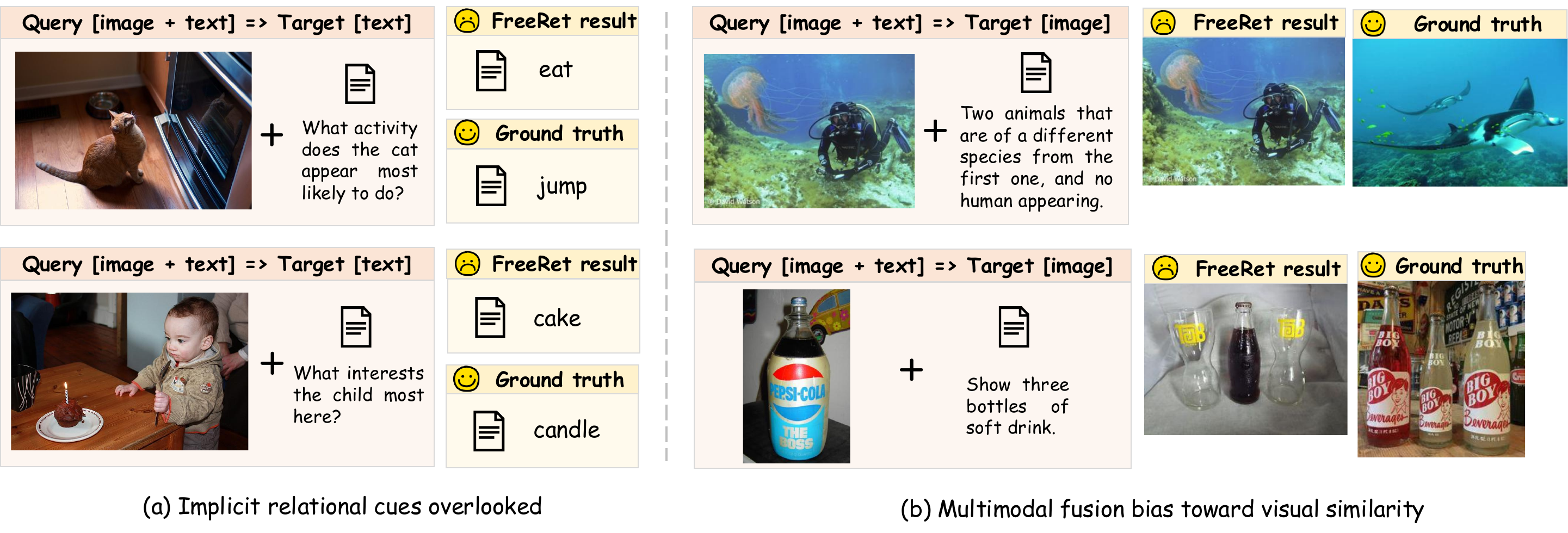}
    \vspace{-3mm}
    \caption{Failure cases of our model. (a) Errors arising from missed implicit relational cues: the model focuses on explicit object semantics while overlooking posture, gaze, and other high‑order relations. (b) Errors caused by multimodal fusion bias: the model over‑prioritizes visual similarity and fails to strictly follow textual instructions, leading to incorrect retrieval despite semantically precise prompts.}
    \vspace{-2mm}
    \label{fig:error_analysis}
\end{figure}

As illustrated in~\cref{fig:error_analysis}, our model exhibits two primary failure modes. First, the model tends to over‑emphasize explicit object semantics while under‑utilizing implicit high‑order relational cues. In the top example of~\cref{fig:error_analysis}(a), although the model correctly identifies both the cat and the food bowl, it incorrectly predicts the action “eat,” failing to interpret the cat’s upward‑facing, crouched posture that signals an imminent jump. A similar issue appears in the bottom example: despite detecting both the cake and the candle, the model overlooks the child’s focused gaze on the candle flame, causing it to choose “cake” instead of the ground‑truth answer “candle.”

Second, we observe a bias in the image–text fusion process, where the model overly relies on visual similarity at the expense of faithfully following the textual instruction. In the bottom example of~\cref{fig:error_analysis}(b), the query specifies “three bottles of soft drink,” yet the model selects an image containing only one bottle because it is visually closer to the query. In the top example, the presence of the original query image within the retrieval pool amplifies this bias: the visually identical but semantically incorrect candidate ends up dominating the model’s decision.

Together, these findings may offer insights for the future development of training-free retrievers.

\section{Computational Efficiency Analysis}
\begin{table}[h]
\centering
\vspace{-1mm}
\caption{\textbf{Comparison of computational efficiency with existing two-stage retrievers.}}
\vspace{-1mm}
\resizebox{1\textwidth}{!}{
\begin{tabular}{l c c c c c c c c}
\toprule
\textbf{Method} & \textbf{\makecell{Foundation\\(embed+rerank)}} & \textbf{Unify} & \textbf{\makecell{Train\\Data}} & \textbf{\makecell{GPU Memory\\(embed+rerank)}} & \textbf{\makecell{Latency\\(embed)}} & \textbf{\makecell{Precision@1\\(embed)}} & \textbf{\makecell{Latency\\(embed+rerank) }}&\textbf{ \makecell{Precision@1\\(embed+rerank)}} \\
\midrule
MM-Embed$_{r10}$ & NV-Embed-7B + LLaVA-Next-7B & ✗ & 1.1M & 33.28 GB & 0.11s & 52.8 & 1.51s & 54.9 \\
LamRA$_{r10}$ & Qwen2.5VL-7B + Qwen2.5VL-7B & ✗ & 1.4M & 30.91 GB & 0.08s & 51.3 & 0.78s & 55.0 \\
\rowcolor{mildblue}FreeRet$_{r10}$ & Qwen2.5VL-3B & ✓ & - & 9.01 GB & 0.08s & 50.9 & 0.35s & 59.9 \\
\rowcolor{mildblue}FreeRet$_{r10}$ & Qwen2.5VL-7B & ✓ & - & 17.75 GB & 0.08s & 53.7 & 0.47s & 67.8 \\
\bottomrule
\end{tabular}
}
\label{tab:efficiency_compare}
\end{table}

\begin{table}[h]
\centering
\vspace{-2mm}
\caption{\textbf{Computational efficiency analysis with different numbers of reranking candidates.}}
\vspace{-1mm}
\resizebox{0.6\textwidth}{!}{
\begin{tabular}{c c c c c}
\toprule
\textbf{Backbone} & \textbf{\# Candidate} & \textbf{Latency (s)} & \textbf{GPU Memory (GB)} & \textbf{MMEB Precision@1}\\
\midrule
\multirow{5}{*}{Qwen2.5VL 3B}
  & 0  & 0.08 & 7.09  & 50.9 \\
  & 2  & 0.16 & 8.36  & 55.2 \\
  & 10 & 0.35 & 9.01  & 59.9 \\
  & 20 & 0.81 & 9.94  & 60.4 \\
  & 50 & 2.12 & 12.97 & 60.3 \\
\midrule
\multirow{5}{*}{Qwen2.5VL 7B}
  & 0  & 0.08 & 14.58 & 53.7 \\
  & 2  & 0.21 & 16.76 & 60.4 \\
  & 10 & 0.47 & 17.75 & 67.8 \\
  & 20 & 1.06 & 19.13 & 69.5 \\
  & 50 & 2.97 & 23.59 & 70.7 \\
\bottomrule
\end{tabular}
}
\label{tab:efficiency_analysis}
\vspace{-1mm}
\end{table}
Compared with the embedding-only pipeline, adding a reranking stage naturally increases inference-time computational cost. To characterize this overhead, we analyze the efficiency of FreeRet by (1) comparing it with previous two-stage retrievers and (2) examining how performance and resource usage vary with the number of reranking candidates. In both evaluations, we run each method across the entire MMEB benchmark, measuring only the model’s forward pass and reporting average per-sample latency and peak GPU memory usage.

\cref{tab:efficiency_compare} summarizes the comparison with prior two-stage retrievers. FreeRet achieves substantially better end-to-end retrieval performance while being far more efficient. Because existing methods rely on separate embedding and reranking models, their memory footprint is nearly doubled. In contrast, FreeRet unifies embedding and reranking within a single MLLM. This architecture enables the reranking stage to reuse the vision‑encoder outputs and the shared-prefix KV cache, greatly reducing computational overhead.

\cref{tab:efficiency_analysis} reports efficiency under different reranking candidate sizes. Without reranking, FreeRet maintains efficiency comparable to standard embedding-only systems. As the number of candidates increases, both latency and memory usage grow accordingly. For practical deployment, the number of candidates offers a flexible knob to balance efficiency and effectiveness, enabling applications to choose the optimal trade-off for their resource budgets.

\section{Limitations}
As discussed in~\cref{sub_sec:massive_candidate}, FreeRet inherits the computational overhead of MLLM-based retrievers, which may limit its practicality in resource-constrained environments. Furthermore, unlike data-driven methods that can adapt through task-specific training, FreeRet is entirely training-free and relies solely on the underlying multimodal understanding and instruction-following capability of the foundation model. Consequently, its performance may degrade when the base model itself provides weak representations or multimodal reasoning ability.


\end{document}